\newcolumntype{H}{>{\setbox0=\hbox\bgroup}c<{\egroup}@{}} 
\newcommand\msmall[1]{\mbox{\tiny\ensuremath{#1}}} 
\pgfplotsset{width=7.5cm,compat=1.12}
\let\save@mathaccent\mathaccent
\newcommand*\if@single[3]{%
  \setbox0\hbox{${\mathaccent"0362{#1}}^H$}%
  \setbox2\hbox{${\mathaccent"0362{\kern0pt#1}}^H$}%
  \ifdim\ht0=\ht2 #3\else #2\fi
  }
\newcommand*\rel@kern[1]{\kern#1\dimexpr\macc@kerna}
\newcommand*\widebar[1]{\@ifnextchar^{{\wide@bar{#1}{0}}}{\wide@bar{#1}{1}}}
\newcommand*\wide@bar[2]{\if@single{#1}{\wide@bar@{#1}{#2}{1}}{\wide@bar@{#1}{#2}{2}}}
\newcommand*\wide@bar@[3]{%
  \begingroup
  \def\mathaccent##1##2{%
    \let\mathaccent\save@mathaccent
    \if#32 \let\macc@nucleus\first@char \fi
    \setbox\z@\hbox{$\macc@style{\macc@nucleus}_{}$}%
    \setbox\tw@\hbox{$\macc@style{\macc@nucleus}{}_{}$}%
    \dimen@\wd\tw@
    \advance\dimen@-\wd\z@
    \divide\dimen@ 3
    \@tempdima\wd\tw@
    \advance\@tempdima-\scriptspace
    \divide\@tempdima 10
    \advance\dimen@-\@tempdima
    \ifdim\dimen@>\z@ \dimen@0pt\fi
    \rel@kern{0.6}\kern-\dimen@
    \if#31
      \overline{\rel@kern{-0.6}\kern\dimen@\macc@nucleus\rel@kern{0.4}\kern\dimen@}%
      \advance\dimen@0.4\dimexpr\macc@kerna
      \let\final@kern#2%
      \ifdim\dimen@<\z@ \let\final@kern1\fi
      \if\final@kern1 \kern-\dimen@\fi
    \else
      \overline{\rel@kern{-0.6}\kern\dimen@#1}%
    \fi
  }%
  \macc@depth\@ne
  \let\math@bgroup\@empty \let\math@egroup\macc@set@skewchar
  \mathsurround\z@ \frozen@everymath{\mathgroup\macc@group\relax}%
  \macc@set@skewchar\relax
  \let\mathaccentV\macc@nested@a
  \if#31
    \macc@nested@a\relax111{#1}%
  \else
    \def\gobble@till@marker##1\endmarker{}%
    \futurelet\first@char\gobble@till@marker#1\endmarker
    \ifcat\noexpand\first@char A\else
      \def\first@char{}%
    \fi
    \macc@nested@a\relax111{\first@char}%
  \fi
  \endgroup
}
\definecolor{forestgreen}{rgb}{0.13, 0.55, 0.13}
\definecolor{dodgerblue}{rgb}{0.12, 0.56, 1.0}
\newtheorem{theorem}{Theorem}[section]
\newtheorem{lemma}[theorem]{Lemma}
\newtheorem{claim}{Claim}
\newcommand{\plane}{\textsc{PlanE}\xspace}
\newcommand{\baseplane}{\textsc{BasePlanE}\xspace}
\newcommand{\ebaseplane}{\textsc{E-BasePlanE}\xspace}
\newcommand{\qmsub}{$\text{QM9}_\text{CC}$\xspace}
\newcommand{\code}{\textsc{code}\xspace}
\newcommand{\tricode}{\textsc{TriCode}\xspace}
\newcommand{\bicode}{\textsc{BiCode}\xspace}
\newcommand{\update}{\textsc{Update}\xspace}
\newcommand{\trienc}{\textsc{TriEnc}\xspace}
\newcommand{\bienc}{\textsc{BiEnc}\xspace}
\newcommand{\cutenc}{\textsc{CutEnc}\xspace}
\newcommand{\spqr}{\textsc{Spqr}\xspace}
\newcommand{\blockcut}{\textsc{BlockCut}\xspace}
\newcommand{\cut}{\textsc{Cut}\xspace}
\newcommand{\bic}{\textsc{Bc}\xspace}
\newcommand{\canonicalroot}{\textsc{root}\xspace}
\newcommand{\khc}{\textsc{KHC}\xspace}
\newcommand{\set}[1]{\{#1\}}
\newcommand{\mlp}{\textsc{MLP}\xspace}
\def\vh{{\bm{h}}}
\def\vp{{\bm{p}}}
\def\vz{{\bm{z}}}
\DeclareMathAlphabet{\mathsfit}{\encodingdefault}{\sfdefault}{m}{sl}
\SetMathAlphabet{\mathsfit}{bold}{\encodingdefault}{\sfdefault}{bx}{n}
\def\sC{{\mathbb{C}}}
\def\sN{{\mathbb{N}}}
\def\sP{{\mathbb{P}}}
\def\sR{{\mathbb{R}}}
\title{\plane: Representation Learning over Planar Graphs}
\author{
  Radoslav Dimitrov\thanks{This work is largely conducted while these authors were still affiliated with the University of Oxford.}~~$^{\dag}$ \\
  Department of Computer Science\\
  University of Oxford \\
  \texttt{contact@radoslav11.com} 
    \And
  Zeyang Zhao$^{*}$\thanks{Equal contribution.} \\
  Department of Computer Science\\
  University of Oxford \\
  \texttt{zeyzang.zhao@cs.ox.ac.uk}
   \And
  Ralph Abboud$^{*}$ \\
  Department of Computer Science\\
  University of Oxford \\
  \texttt{ralph@ralphabb.ai}
    \And
  {\.I}smail {\.I}lkan Ceylan\\
  Department of Computer Science\\
  University of Oxford \\
  \texttt{ismail.ceylan@cs.ox.ac.uk} 
}
\begin{document}

\maketitle

\begin{abstract}
Graph neural networks iteratively compute representations of nodes of an input graph through a series of transformations in such a way that the learned graph function is isomorphism invariant on graphs, which makes the learned representations \emph{graph invariants}.
On the other hand, it is well-known that graph invariants learned by these class of models are \emph{incomplete}: there are pairs of non-isomorphic graphs which cannot be distinguished by standard graph neural networks. This is unsurprising given the computational difficulty of graph isomorphism testing on general graphs, but the situation begs to differ for special graph classes, for which efficient graph isomorphism testing algorithms are known, such as planar graphs. 
The goal of this work is to design architectures for \emph{efficiently} learning \emph{complete} invariants of planar graphs. Inspired by the classical planar graph isomorphism algorithm of Hopcroft and Tarjan, we propose \plane as a framework for planar representation learning. \plane includes architectures which can learn \emph{complete} invariants over planar graphs while remaining practically scalable. 
We validate the strong performance of \plane architectures on various planar graph benchmarks. 
\end{abstract}

\section{Introduction}
Graphs are used for representing relational data in a wide range of domains, including physical~\cite{ShlomiBV21}, chemical~\citep{DuvenaudMIBHGA15,KearnesCBPR16}, and biological~\cite{ZitnikAL18,FoutBSH17} systems, which led to increasing interest in machine learning (ML) over graphs. Graph neural networks (GNNs) \cite{FrancoMCMG09,GoriMS05} have become prominent for graph ML for a wide range of tasks, owing to their capacity to explicitly encode desirable relational inductive biases \cite{BattagliaHBGZMTRSFGSBGDVANLDHWKBVLP18}. 
GNNs iteratively compute representations of nodes of an input graph through a series of transformations in such a way that the learned graph-level function represents a \emph{graph invariant}: a property of graphs which is preserved under all isomorphic transformations.

Learning functions on graphs is challenging for various reasons, particularly since the learning problem contains the infamous graph isomorphism problem, for which the best known algorithm, given in a breakthrough result by \citet{Babai15}, runs in quasi-polynomial time. A large class of GNNs can thus only learn \emph{incomplete} graph invariants for scalablity purposes. In fact, standard GNNs are known to be at most as expressive as the 1-dimensional Weisfeiler-Leman algorithm~(1-WL)\cite{BorisA68} in terms of distinguishing power~\cite{MorrisRFHLRG19,XuHLJ19}. 
There are simple non-isomorphic pairs of graphs, such as the pair shown in \Cref{fig:1wl-indist}, which cannot be distinguished by 1-WL and by a large class of GNNs. This limitation motivated a large body of work aiming to explain and overcome the expressiveness barrier of these architectures~\cite{MorrisRFHLRG19,MaronHSL19,KerivenP19,AbboudCGL21,SatoYK21,DasoulasSSV20,Loukas20,BodnarFOWLMB21,BarcelKMPRS20,BouritsasFZB20,BevilacquaFLSCBBM22}.

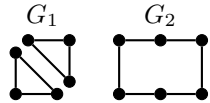
\begin{wrapfigure}{r}{0.26\textwidth}
    \vspace{-6mm}
	\centering
	\begin{tikzpicture}[scale=.55, vertex/.style = {draw,fill=black,circle,inner sep=1pt, minimum height= 1.5mm}]
    	\begin{scope}
        	\node[vertex] (v0) at (0, 0) {};
        	\node[vertex] (v1) at (0, 1) {};
        	\node[vertex] (v2) at (1, 0) {};
                \node[vertex] (v3) at (1.3, 0.3) {};
        	\node[vertex] (v4) at (0.3, 1.3) {};
        	\node[vertex] (v5) at (1.3, 1.3) {};        
                 \draw[thick] (v0) edge (v1) (v1) edge (v2) (v2) edge (v0) ;
                 \draw[thick] (v3) edge (v4) (v4) edge (v5) (v5) edge (v3) ;
        	\node at (0.65, 1.9) {$G_1$};
    	\end{scope}
    	
    	\begin{scope}[xshift=2.5cm]
        	\node[vertex] (v0) at (2, 1.3) {};
        	\node[vertex] (v1) at (1, 1.3) {};
        	\node[vertex] (v2) at (0, 1.3) {};
        	\node[vertex] (v3) at (0, 0) {};
        	\node[vertex] (v4) at (1, 0) {};
        	\node[vertex] (v5) at (2, 0) {};
        	\draw[thick] (v0) edge (v1) (v1) edge (v2) (v2) edge (v3) (v3) edge (v4)
        	(v4) edge (v5) (v5) edge (v0);
        	\node at (1,1.9) {$G_2$};
    	\end{scope}
	\end{tikzpicture}	
	\caption{Two graphs indistinguishable by $1$-WL.} %
	\label{fig:1wl-indist}
\end{wrapfigure}
The expressiveness limitations of standard GNNs already apply on planar graphs, since, e.g., the graphs $G_1$ and $G_2$ from \Cref{fig:1wl-indist} are planar. There are, however, efficient  and complete graph isomorphism testing algorithms for planar graphs~\cite{HopcroftT71,HopcroftK74,KuklukLD04}, which motivates an aligned design of dedicated architectures with better properties over planar graphs. Building on this idea, we propose architectures for \emph{efficiently} learning \emph{complete} invariants over planar graphs.  

\begin{wrapfigure}{l}{0.23\textwidth}
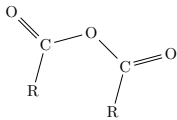

        \centering
        \scalebox{0.6}{
        \chemfig{[:75]R-C(=[::+60]O)-[::-60]O-[::-60]C(=[::+60]O)-[::-60]R}}
	\caption{Acid anhydride as a planar graph.} %
	\label{fig:molecule}
\end{wrapfigure}
Planar structures appear, e.g., in road networks~\cite{XieL09}, circuit design~\cite{SandeepT84}, and most importantly, in biochemistry~\cite{SimmonsM81}. Molecules are commonly encoded as planar graphs with node and edge types, as shown in~\Cref{fig:molecule}, which enables the application of many graph ML architectures on biochemistry tasks. For example, \emph{all} molecule datasets in OGBG~\cite{HuFZDRLCL20} consist of planar graphs. Biochemistry has been a very important application domain for both classical graph isomorphism testing\footnote{Graph isomorphism first appears in the chemical documentation literature~\cite{RayK57}, as the problem of matching a molecular graph against a database of such graphs; see, e.g., \citet{MartinP20} for a recent survey.} and for graph ML. By designing architectures for learning \emph{complete} invariants over planar graphs, our work unlocks the full potential of both these domains.

The contributions of this work can be summarized as follows:
\begin{enumerate}[-,leftmargin=.5cm]
    \item Building on the classical literature, we introduce \plane as a framework for learning \emph{isomorphism-complete} invariant functions over planar graphs and derive the \baseplane architecture (\Cref{sec:plane}). We prove that \baseplane is a \emph{scalable}, \emph{complete} learning algorithm on planar graphs: \baseplane can distinguish \emph{any} pair of non-isomorphic planar graphs (\Cref{sec:theory}). 
    \item We conduct an empirical analysis for \baseplane evaluating its \emph{expressive power} on three tasks (\Cref{sec:exp}), and validating its \emph{performance} on real-world graph classification~(\Cref{sec:classification}) and regression benchmarks (\Cref{sec:regression-qm9} and \ref{sec:regression-zinc}). The empirical results support the presented theory, and also yield multiple state-of-the-art results for \baseplane on molecular datasets.  
\end{enumerate}
The proofs and additional experimental details are delegated to the appendix of this paper.

\section{A primer on graphs, invariants, and graph neural networks}
\label{sec: primer}

\textbf{Graphs.} 
Consider simple, undirected graphs ${G=(V,E,\zeta)}$, where $V$ is a set of nodes, $E\subseteq V \times V$ is a set of edges, and $\zeta:V\to \sC$ is a (coloring) function. If the range of this map is $\sC=\sR^d$, we refer to it as a $d$-dimensional \emph{feature map}. 
A graph is \emph{connected} if there is a path between any two nodes and disconnected otherwise. A graph is \emph{biconnected} if it cannot become disconnected by removing any single node. A graph is \emph{triconnected} if the graph cannot become disconnected by removing any two nodes.  A graph is \emph{planar} if it can be drawn on a plane such that no two edges intersect. 

\textbf{Components.}
A \emph{biconnected component} of a graph $G$ is a maximal biconnected subgraph. Any connected graph $G$ can be efficiently decomposed into a tree of biconnected components called the \emph{Block-Cut tree} of $G$ \cite{Hopcroft73}, which we denote as $\blockcut(G)$. The blocks are attached to each other at shared nodes called \emph{cut nodes}.
A \emph{triconnected component} of a graph $G$ is a maximal triconnected subgraph. 
Triconnected components of a graph $G$ can also be compiled (very efficiently \cite{Gutwenger01}) into a tree, known as the \emph{SPQR tree} \cite{Battista96}, which we denote as $\spqr(G)$. 
%
Given a graph $G$, we denote by $\sigma^G$ the set of all SPQR components of $G$ (i.e., nodes of $\spqr(G)$), and by $\pi^G$ the set of all biconnected components of $G$. Moreover, we denote by $\sigma_u^G$ the set of all SPQR components of $G$ where $u$ appears as a node, and by $\pi_u^G$ the set of all biconnected components of $G$ where $u$ appears as a node.

\textbf{Labeled trees.} We sometimes refer to rooted, undirected, labeled trees $\Gamma = (V, E, \zeta)$, where the canonical root node is given as one of tree's \emph{centroids}: a node with the property that none of its branches contains more than half of the other nodes. We denote by $\canonicalroot(\Gamma)$ the canonical root of $\Gamma$, and define the \emph{depth} $d_u$ of a node $u$ in the tree as the node's minimal distance from the canonical root. The \emph{children} of a node $u$ is the set $\chi(u)=\{v|(u,v) \in E, d_v=d_u+1\}$. The \emph{descendants} of a node $u$ is given by set of all nodes reachable from $u$ through a path of length $k\geq 0$ such that the node at position $j+1$ has one more depth than the node at position $j$, for every $0\leq j \leq k$. Given a rooted tree $\Gamma$ and a node $u$, the \emph{subtree} of $\Gamma$ rooted at node $u$ is the tree induced by the descendants of $u$, which we donote by $\Gamma_u$. For technical convenience, we allow the induced subtree $\Gamma_u$ of a node $u$ even if $u$ does not appear in the tree $\Gamma$, in which case $\Gamma_u$ is the empty tree.

\textbf{Node and graph invariants.}  An \emph{isomorphism} from a  graph ${G=(V,E,\zeta)}$ to a graph ${G'=(V',E',\zeta')}$ is a bijection $f:V\to V'$ such that $\zeta(u)=\zeta'(f(u))$ for all $u\in V$, and $(u,v)\in E$ if and only if $(f(u),f(v))\in E'$, for all $u,v\in V$. 
A \emph{node invariant} is a function $\xi$ that associates with each graph ${G=(V,E,\zeta)}$ a function $\xi(G)$ defined on $V$ such that for all graphs $G$ and $G'$, all isomorphisms $f$ from $G$ to $G'$, and all nodes $u \in V$, it holds that $\xi(G)(u)=\xi(G')(f(u))$. 
A \emph{graph invariant} is a function $\xi$ defined on graphs such that $\xi(G)=\xi(G')$ for all isomorphic graphs $G$ and $G'$. 
We can derive a graph invariant $\xi$ from a node invariant $\xi'$ by mapping each graph $G$ to the multiset $\{\!\{  \xi'(G)(u) \mid u \in V\}\!\}$.
We say that a graph invariant $\xi$ \emph{distinguishes} two graphs $G$ and $G'$ if $\xi(G)\neq\xi(G')$. If a graph invariant $\xi$ distinguishes $G$ and $G'$ then there is no isomorphism from $G$ to $G'$. If the converse also holds, then $\xi$ is a \emph{complete} graph invariant. 
We can speak of (in)completeness of invariants on special classes of graphs, e.g., 1-WL computes an incomplete invariant on general graphs, but it is well-known to compute a complete invariant on trees~\cite{Martin17}.

\textbf{Message passing neural networks.} A vast majority of GNNs are instances of \emph{message passing neural networks (MPNNs)}~\cite{GilmerSRVD17}.  Given an input graph ${G=(V,E,\zeta)}$, an MPNN sets, for each node $u \in V$, an initial node representation $\zeta(u)=\vh_u^{(0)}$, and iteratively computes representations $\vh^{(\ell)}_u$ for a fixed number of layers $0 \leq \ell \leq L$ as:
\begin{align*}
\vh_u^{(\ell+1)}  \coloneqq \phi\left(\vh_u^{(\ell)}, \psi(\vh_u^{(\ell)}, \{\!\!\{ \vh_v^{(\ell)}|~  v \in N_u\}\!\!\}) \right),
\end{align*}
where $\phi$ and $\psi$ are respectively \emph{update} and \emph{aggregation} functions, and $N_u$ is the neighborhood of $u$. 
Node representations can be \emph{pooled} to obtain graph-level embeddings by, e.g., summing all node embeddings. We denote by $\vz^{(L)}$ the resulting graph-level embeddings. In this case, an MPNN can be viewed as an encoder that maps each graph $G$ to a representation $\vz^{(L)}_G$, computing a graph invariant. 

\section{Related work}
The expressive power of MPNNs is upper bounded by 1-WL~\cite{XuHLJ19,MorrisRFHLRG19} in terms of distinguishing graphs, and by the logic $\mathsf{C}^2$ in terms of capturing functions over graphs \cite{BarcelKMPRS20}, motivating a body of work to improve on these bounds. One notable direction has been to enrich node features with unique node identifiers~\cite{Loukas20,YouSYL21}, random discrete colors~\cite{DasoulasSSV20}, or random noisy dimensions~\cite{SatoYK21, AbboudCGL21}. %
Another line of work proposes \emph{higher-order} architectures~\cite{MorrisRFHLRG19,MaronHSL19, MaronFSL19,KerivenP19} based on higher-order tensors~\cite{MaronFSL19}, or a higher-order form of message passing~\cite{MorrisRFHLRG19}, which typically align with a $k$-dimensional WL test\footnote{We write $k$-WL to refer to the folklore (more standard) version of the algorithm following \citet{Martin21}.}, for some $k>1$. Higher-order architectures are not scalable, and most existing models are upper bounded by 2-WL (or, \emph{oblivious} 3-WL). 
Another body of work is based on sub-graph sampling ~\cite{BevilacquaFLSCBBM22,BouritsasFZB20,BodnarFOWLMB21,ThiedeZK21}, with pre-set sub-graphs used within model computations. These approaches can yield substantial expressiveness improvements, but they rely on manual sub-graph selection, and require running expensive pre-computations. Finally, MPNNs have been extended to incorporate other graph kernels, i.e., shortest paths \cite{AbboudDC22,YingCLZKHSL21}, random walks \cite{PerozziRS14,NikolentzosV20} and nested color refinement \cite{ZhangL21}.

The bottleneck limiting the expressiveness of MPNNs is the implicit need to perform graph isomorphism checking, which is challenging in the general case. However, there are classes of graphs, such as planar graphs, with efficient and complete isomorphism algorithms, thus eliminating this bottleneck. 
For planar graphs, the first complete algorithm for isomorphism testing was presented by 
\citet{HopcroftT71}, and was followed up by a series of algorithms~\cite{HopcroftK74,KelloggS76,ShihH99,Eppstein99}. \citet{KuklukLD04} presented an algorithm, which we refer to as \khc, that is more suitable for practical applications, and that we align with in our work. 
As a result of this alignment, our approach is the first \emph{efficient} and \emph{complete} learning algorithm on planar graphs. 
Observe that 3-WL (or, \emph{oblivious} 4-WL) is also known to be complete on planar graphs~\cite{SandraIP19}, but this algorithm is far from being scalable~\cite{NeilE90} and as a result there is no neural, learnable version implemented. 
%
By contrast, our architecture learns representations of efficiently computed components and uses these to obtain refined representations. Our approach extends the inductive biases of MPNNs based on structures, such as biconnected components, which are recently noted to be beneficial in the literature \cite{ZhangLWH23}. 

\section{A practical planar isomorphism algorithm}
%
The idea behind the \khc algorithm is to compute a canonical code for planar graphs, allowing us to reduce the problem of isomorphism testing to checking whether the codes of the respective graphs are equal. Importantly, we do not use the codes generated by \khc in our model, and view codes as an abstraction through which alignment with \khc is later proven.
Formally, we can define a code as a string over the alphabet $\Sigma \cup \sN$: for each graph, the \khc algorithm computes codes for various components resulting from decompositions, and gradually builds a code representation for the graph. For readability, we allow reserved symbols $"("$, $")"$ and  $","$ in the generated codes. We present an overview of \khc and refer to \citet{KuklukLD04} for details.
We can assume that the planar graphs are connected, as the algorithm can be extended to disconnected graphs by independently computing the codes for each of the components, and then concatenating them in their lexicographical order.

\textbf{Generating a code for the the graph.} Given a connected planar graph $G = (V, E, \zeta)$, \khc decomposes $G$ into a Block-Cut tree $\delta = \blockcut(G)$. Every node $u \in \delta$ is either a cut node of $G$, or a virtual node associated with a biconnected component of $G$. \khc iteratively removes the leaf nodes of $\delta$ as follows: if $u$ is a leaf node associated with a biconnected component $B$, \khc uses a subprocedure $\bicode$ to compute the canonical code $\code(\delta_u)=\bicode(B)$ and removes $u$ from the tree; otherwise, if the leaf node $u$ is a cut node, \khc overrides the initial code $\code((\{u\}, \{\}))$, using an aggregate code of the removed biconnected components in which $u$ occurs as a node. This procedure continues until there is a single node left, and the code for this remaining node is taken as the code of the entire connected graph. This process yields a complete graph invariant. 
Conceptually, it is more convenient for our purposes to reformulate this procedure as follows: we first canonically root $\delta$, and then code the subtrees in a \emph{bottom-up procedure}. Specifically, we iteratively generate codes for subtrees and the final code for $G$ is then simply the code of $\delta_{\canonicalroot(\delta)}$. 

\textbf{Generating a code for biconnected components.} \khc relies on a subprocedure \bicode to compute a code, given a biconnected planar graph $B$. Specifically, it uses the SPQR tree $\gamma = \spqr(B)$ which uniquely decomposes a biconnected component into a tree with virtual nodes of one of four types: $S$ for \emph{cycle} graphs, $P$ for \emph{two-node dipole} graphs, $Q$ for a graph that has a \emph{single edge}, and finally $R$ for a triconnected component that is \emph{not} a dipole or a cycle. We first generate codes for the induced sub-graphs based on these virtual nodes. Then the SPQR tree is canonically rooted, and similarly to the procedure on the Block-Cut tree, we iteratively build codes for the subtrees of $\gamma$ in a bottom-up fashion. Due to the simpler structure of the SPQR tree, instead of making overrides, the recursive code generation is done by prepending a number $\theta(C,C')$ for a parent SPQR tree node $C$ and each $C' \in \chi(C)$. This number is generated based on the way $C$ and $C'$ are connected in $B$.
Generating codes for the virtual $P$ and $Q$ nodes is trivial. For $S$ nodes, we use the lexicographically smallest ordering of the cycle, and concatenate the individual node codes. However, for $R$ nodes we require a more complex procedure and this is done using Weinberg's algorithm as a subroutine.

\textbf{Generating a code for triconnected components.} 
\citet{Whitney33} has shown that triconnected graphs have only two planar embeddings, and \citeauthor{Weinberg66} introduced an algorithm that computes a canonical code for triconnected planar graphs \cite{Weinberg66} which we call $\tricode$. This code is used as one of the building blocks of the \khc algorithm and can be extended to labeled triconnected planar graphs, which is essential for our purposes. 
Weinberg's algorithm \cite{Weinberg66} generates a canonical order for a triconnected component $T$, by traversing all the edges in both directions via a walk. Let $\omega$ be the sequence of visited nodes in this particular walk and write $\omega[i]$ to denote $i$-th node in it. 
This walk is then used to generate a sequence $\kappa$ of same length, that corresponds to the order in which we first visit the nodes: for each node $u = \omega[i]$ that occurs in the walk, we set $\kappa[i] = 1 + |\{\kappa[j] \mid j < i\}|$ if $\omega[i]$ is the first occurrence of $u$, or $\kappa[i] = \kappa[\min\{j \mid \omega[i] = \omega[j]\}]$ otherwise. For example, the walk $\omega = \langle v_1, v_3, v_2, v_3, v_1\rangle$ yields $\kappa = \langle 1, 2, 3, 2, 1 \rangle$. Given such a walk of length $k$ and a corresponding sequence $\kappa$, we compute the following canonical code: $\tricode(T)=\left (\kappa[1],\code((\{\omega[1]\}, \{\}))), \ldots, (\kappa[k],\code((\{\omega[k]\},\{\})) \right)$.

\section{\plane: Representation learning over planar graphs}
\label{sec:plane}
\begin{figure}[!t]
    \centering
    \rule[0.5ex]{\linewidth}{1pt} 
    \includegraphics[width=1\textwidth]{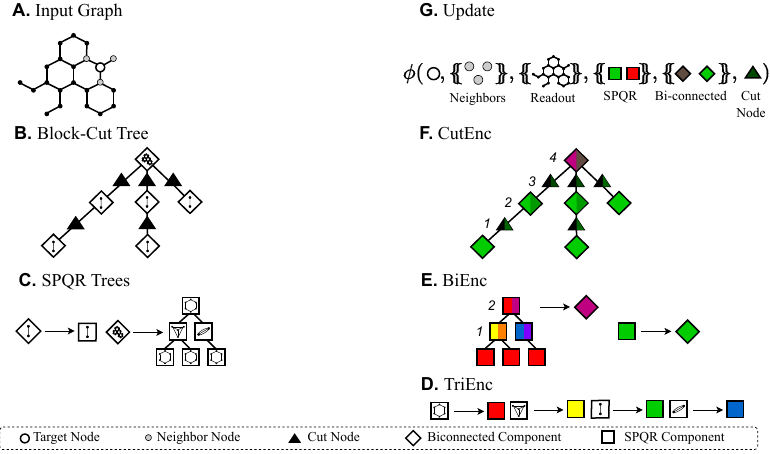}
    \rule[2ex]{\linewidth}{1pt}   
    \caption{Given an input graph (A) PlanE computes the \blockcut (B) and \spqr (C) trees and assigns initial embeddings to SPQR components using \trienc (D). Then, \bienc iteratively traverses each \spqr tree bottom-up (numbered), updates \spqr embeddings, and computes biconnected component embeddings (E). \cutenc then analogously traverses the \blockcut tree to compute a graph embedding (F). Finally, the node representation is updated (G).
    }
    \label{fig:plane-visual}
    \vspace{-2mm}
\end{figure}

\khc generates a unique code for every planar graph in a hierarchical manner based on the decompositions. Our framework aligns with this algorithm: given an input graph $G = (V, E, \zeta)$, \plane first computes the \blockcut and \spqr trees (\textbf{A} -- \textbf{C}), and then learns representations for the nodes, the components, and the whole graph (\textbf{D} -- \textbf{G}), as illustrated in \Cref{fig:plane-visual}.

Formally, \plane sets the initial node representations $\vh_u^{(0)} = \zeta(u)$ for each node $u \in V$, and computes, for every layer $1 \leq \ell \leq L$,  the representations $\widehat{\vh}_{C}^{(\ell)}$ of \spqr components of $\sigma^G$, the representations $\widetilde{\vh}_{B}^{(\ell)}$ of biconnected components $B \in \pi^G$, the representations $\widebar{\vh}_{\delta_u}^{(\ell)}$ of subtrees in the Block-Cut tree $\delta=\blockcut(G)$ for each node $u \in V$,
and the representations $\vh_u^{(\ell)}$ of nodes as:
\begin{align*}
\widehat{\vh}_{C}^{(\ell)} & = \trienc\big(C, \{(\vh_{u}^{(\ell-1)},u) |~ u \in C\}\big) \\ 
\widetilde{\vh}_{B}^{(\ell)} & = \bienc \big(B, \{ (\widehat{\vh}_{C}^{(\ell)},C) |~ C \in \sigma^B\}\big) \\
\widebar{\vh}_{\delta_u}^{(\ell)}& = \cutenc \big(\delta_u, \{(\vh_{v}^{(\ell-1)},v) |~ v \in V\}, \{ (\widetilde{\vh}_{B}^{(\ell)},B)|~B \in \pi^G\}\big) \\
\vh_u^{(\ell)} & = \phi \big(\vh_u^{(\ell-1)},
\{\!\!\{\vh_v^{(\ell-1)} |~v \in N_u \}\!\!\}, 
\{\!\!\{\vh_{v}^{(\ell-1)} |~ v \in V\}\!\!\}, 
\{\!\!\{\widehat{\vh}_{T}^{(\ell)} |~ T \in \sigma_u^G \}\!\!\},
\{\!\!\{\widetilde{\vh}_{B}^{(\ell)} |~ B \in \pi_u^G  \}\!\!\}, 
\widebar{\vh}_{\delta_u}^{(\ell)} \big)
\end{align*}

where \trienc, \bienc, and \cutenc are invariant encoders and $\phi$ is an \update function. The encoders  \trienc and  \bienc are analogous to \tricode and \bicode of the \khc algorithm. In \plane, we further simplify the final graph code generation, by learning embeddings for the cut nodes, which is implemented by the \cutenc encoder. For graph-level tasks, we  apply a \emph{pooling} function on final node embeddings to obtain a graph-level embedding $\vz_G^{(L)}$.



There are many choices for deriving \plane architectures, but we propose a simple model, \baseplane, to clearly identify the virtue of the model architecture which aligns with KHC, as follows: 

\textbf{\trienc.} 
Given an \spqr component $C$ and the node representations  $\vh_u^{(\ell-1)}$ of each node $u$, \trienc encodes $C$ based on the walk $\omega$ given by Weinberg's algorithm, and its corresponding sequence $\kappa$ as:
\begin{align*}
    \widehat{\vh}_{C}^{(\ell)} =  \mlp \left(
    \sum_{i=1}^{|\omega|} \mlp \left(
            \vh^{(\ell-1)}_{\omega[i]} \| \vp_{\kappa[i]} \| \vp_i
            \right)
    \right), 
\end{align*}
where 
$\vp_x \in \mathbb{R}^d$ is the positional embedding \cite{VaswaniSPUJGKP17}. This is a simple sequence model with a positional encoding on the walk, and a second one based on the generated sequence $\kappa$. Edge features can also be concatenated while respecting the edge order given by the walk. 
The nodes of $\spqr(G)$ are one of the types $S$, $P$, $Q$, $R$, where for $S$, $P$, $Q$ types, we have a trivial ordering for the induced components, and Weinberg's algorithm also gives an ordering for $R$ nodes that correspond to triconnected components.

\textbf{\bienc.} Given a biconnected component $B$ and the representations $\vh_C^{(\ell)}$ of each component induced by a node $C$ in $\gamma =\spqr(B)$, \bienc uses the SPQR tree and the integers $\theta(C,C')$ corresponding to how we connect $C$ and $C' \in \chi(C)$. \bienc then computes a representation for each subtree $\gamma_{C}$ induced by a node $C$ in a bottom up fashion as:
%
\begin{align*}
 \widetilde{\vh}_{\gamma_{C}}^{(\ell)} =  \mlp \left(
      \widehat{\vh}_{C}^{(\ell)} +
      \sum_{C' \in \chi(C)} \mlp \left( \widetilde{\vh}_{\gamma_{C'}}^{(\ell)} \| \vp_{\theta(C,C')} \right)
  \right). 
\end{align*}

This encoder operates in a bottom up fashion to ensure that a subtree representation of the children of $C$ exists before it encodes the subtree $\gamma_{C}$. The representation of the canonical root node in $\gamma$ is used as the representation of the biconnected component $B$ by setting: $\widetilde{\vh}_B^{(\ell)} =   \widetilde{\vh}_{\gamma_{\canonicalroot(\gamma)}}^{(\ell)}$.

\textbf{\cutenc.} 
Given a subtree $\delta_u$ of a Block-Cut tree $\delta$, the representations $\widetilde{\vh}_B^{(\ell)}$ of each biconnected component $B$, and the node representations $\vh^{(\ell-1)}_u$ of each node, \cutenc sets $\widebar{\vh}_{\delta_u}^{(\ell)}=\mathbf{0}^{d(\ell)}$ if $u$ is \emph{not} a cut node; otherwise, it computes the subtree representations as:  
\begin{align*}
    \widebar{\vh}_{\delta_u}^{(\ell)} =  \mlp \left(
        \vh_u^{(\ell-1)} +
        \sum_{B \in \chi(u)} \mlp \left(
            \widetilde{\vh}_B^{(\ell)} +
            \sum_{v \in \chi(B)} \widebar{\vh}_{\delta_v}^{(\ell)} 
        \right)
    \right).
\end{align*}

The \cutenc procedure is called in a bottom-up order to ensure that the representations of the grandchildren are already computed. We learn the cut node subtree representations instead of employing the hierarchical overrides that are present in the \khc algorithm, as the latter is not ideal in a learning algorithm. However, with sufficient layers, these representations are complete invariants.

\textbf{\update.} Putting these altogether, we update the node representations $\vh_u^{(\ell)}$ of each node $u$ as:

\begin{align*}
    \vh_u^{(\ell)}  = f^{(\ell)} \Big( 
    & g_1^{(\ell)}\big(\vh_u^{(\ell-1)} + \sum_{v \in N_u} \vh_v^{(\ell-1)} \big)
    ~\|~g_2^{(\ell)} \big(\sum_{v \in V} \vh_v^{(\ell-1)}\big) ~\| \\
    & g_3^{(\ell)}\big(\vh_u^{(\ell-1)} + \sum_{C \in \sigma_u^G} \widehat{\vh}_C^{(\ell)}\big)~
     ~\|~g_4^{(\ell)}\big(\vh_u^{(\ell-1)} + \sum_{B \in \pi_u^G} \widetilde{\vh}_B^{(\ell)}\big) 
    ~\|~\widebar{\vh}_{\delta_u}^{(\ell)} 
    \Big),
\end{align*}
where $f^{(\ell)}$ and $g_i^{(\ell)}$ are either linear maps or two-layer MLPs. Finally, we pool as:
\begin{align*}
\vz_G = \mlp \left( \big\|_{\ell=1}^{L} \Big( \sum_{u \in V^G} \vh_u^{(\ell)} \Big) \right).
\end{align*}

\section{Expressive power and efficiency of \baseplane}
\label{sec:theory}

We present the theoretical result of this paper, which states that \baseplane can distinguish any pair of planar graphs, even when using only a logarithmic number of layers in the size of the input graphs:

\begin{theorem}
\label{thm:main}
For any planar graphs $G_1 = (V_1, E_1, \zeta_1)$ and $G_2 = (V_2, E_2, \zeta_2)$, there exists a parametrization of \baseplane with at most $L = \lceil\log_2(\max\{|V_1|,|V_2|\})\rceil + 1$ layers, which computes a complete graph invariant, that is, the final graph-level embeddings 
satisfy $\vz_{G_1}^{(L)} \neq \vz_{G_2}^{(L)}$  if and only if $G_1$ and $G_2$ are not isomorphic.
\end{theorem}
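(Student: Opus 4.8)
## Proof Strategy

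\textbf{Overall approach.} The plan is to show that \baseplane faithfully simulates the \khc algorithm: since \khc produces a complete graph invariant, it suffices to prove that the \baseplane embeddings can be parametrized so that they encode the \khc codes injectively. Because the embeddings are invariant (built from \trienc, \bienc, \cutenc, which are isomorphism-invariant by construction), the ``only if'' direction is automatic; the work is entirely in the ``if'' direction, i.e. that non-isomorphic planar graphs receive distinct embeddings. I would first reduce to connected graphs (as the excerpt notes, \khc handles disconnected graphs by concatenating component codes in lexicographic order, and the graph-level pooling with a sufficiently expressive final \mlp can recover a multiset of per-component invariants). Then I would argue that at each layer, each of the three encoders computes an injective function of the relevant \khc sub-code, so that after enough layers the node-level and graph-level embeddings determine the full code of $G$.

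\textbf{Key steps, in order.}
\begin{enumerate}[(1)]
\item \emph{Injective building blocks.} Invoke the standard fact (as in the GIN analysis, cf.\ \citet{XuHLJ19}) that on any countable domain there exist parametrizations of \mlp{}s realizing injective multiset functions and injective sum aggregations. All inputs here (node features from $\zeta$, and all intermediate embeddings after finitely many layers applied to the finitely many graphs $G_1, G_2$) live in a countable set, so this applies. This lets us treat each $g_i^{(\ell)}$, each \mlp in the encoders, and the final pooling \mlp as ``lossless'' on the relevant inputs.
\item \emph{\trienc simulates \tricode.} Show $\widehat{\vh}_C^{(\ell)}$ injectively encodes the pair (type of $C$, the canonical code $\tricode$ of the component restricted to node-codes $\code((\{\omega[i]\},\{\}))$ at ``resolution $\ell-1$''), where the resolution means: built from the node embeddings $\vh_u^{(\ell-1)}$ rather than raw colors. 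The point is that the positional encodings $\vp_{\kappa[i]}, \vp_i$ recover exactly the sequence $(\kappa[i], \vh_{\omega[i]}^{(\ell-1)})$ that defines $\tricode$, and the inner/outer \mlp with sum gives an injective sequence encoding. This handles the $R$ nodes; $S$, $P$, $Q$ nodes are the degenerate walks and are handled identically.
\item \emph{\bienc simulates \bicode.} By induction on the height of $C$ in the rooted SPQR tree $\gamma = \spqr(B)$: assuming $\widetilde{\vh}_{\gamma_{C'}}^{(\ell)}$ injectively encodes $\code(\gamma_{C'})$ for all children, the sum $\sum_{C'} \mlp(\widetilde{\vh}_{\gamma_{C'}}^{(\ell)} \| \vp_{\theta(C,C')})$ injectively encodes the multiset $\{(\theta(C,C'), \code(\gamma_{C'}))\}$, which together with $\widehat{\vh}_C^{(\ell)}$ is exactly the data \bicode uses to build $\code(\gamma_C)$. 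Hence $\widetilde{\vh}_B^{(\ell)}$ at the root encodes $\bicode(B)$ --- \emph{provided the tree height of $\gamma$ is at most $\ell$}. Since the canonical root is a centroid, the height of $\gamma$ is at most $\lceil \log_2 |V(\gamma)|\rceil \le \lceil \log_2 |V|\rceil$, and a single layer suffices to process one level (the bottom-up pass within layer $\ell$ is part of one \baseplane layer, not one layer per level --- I must be careful that the within-layer recursion is legitimate as written, which it is, since the formula defines $\widetilde{\vh}_{\gamma_C}$ in terms of $\widetilde{\vh}_{\gamma_{C'}}$ for children computed earlier in the same layer). Actually I need to recheck: since \trienc uses $\vh^{(\ell-1)}$ and the full SPQR pass happens inside layer $\ell$, one layer already produces correct biconnected-component codes at resolution $\ell-1$; the $\log$ budget is needed for the Block-Cut recursion below, not here.
\item \emph{\cutenc simulates the cut-node override procedure.} Again by induction on depth in $\delta = \blockcut(G)$, but now the recursion must be spread across \emph{layers}: $\widebar{\vh}_{\delta_u}^{(\ell)}$ depends on $\widebar{\vh}_{\delta_v}^{(\ell)}$ for grandchildren $v$, which the excerpt says are computed ``in a bottom-up order'' within the layer --- so in fact this too is within one layer. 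Let me restructure: the genuine reason $L \approx \log_2 |V| + 1$ layers are needed is that the node update $\vh_u^{(\ell)}$ only sees $\widebar{\vh}_{\delta_u}^{(\ell)}$ (the subtree at $u$), and a node's final embedding must reflect the \emph{whole} Block-Cut tree; the information about one half of $\delta$ reaches a node on the other half only through the graph-level term $g_2^{(\ell)}(\sum_v \vh_v^{(\ell-1)})$ which needs iterated rounds, OR through the root's subtree embedding. I would prove by induction on $\ell$ that after $\ell$ layers, $\vh_u^{(\ell)}$ determines $\code(\delta_w)$ for every $w$ within Block-Cut-tree distance $2^{\ell-1}$ of $u$ (roughly), so after $\lceil\log_2 |V|\rceil + 1$ layers every node knows the code of the full tree, in particular $\code(\delta_{\canonicalroot(\delta)}) = \code(G)$.
\item \emph{Conclusion.} Once some layer $\ell^\star \le L$ has every node embedding encoding $\code(G)$, the graph pooling $\vz_G = \mlp(\|_\ell \sum_u \vh_u^{(\ell)})$ (with injective sum and \mlp) encodes $\code(G)$. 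Since \khc's code is a complete invariant on planar graphs, $\vz_{G_1} = \vz_{G_2} \iff \code(G_1) = \code(G_2) \iff G_1 \cong G_2$.
\end{enumerate}

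\textbf{Main obstacle.} The delicate point is bookkeeping the \emph{depth budget}: precisely which recursions happen within a single \baseplane layer (the SPQR bottom-up pass, the Block-Cut bottom-up pass --- both are defined as complete bottom-up sweeps inside layer $\ell$) versus which genuinely require iterating layers (propagating a node's knowledge of its own subtree code outward to the rest of the Block-Cut tree). One must show this outward propagation halves the ``unknown'' distance each layer --- using that the canonical root is a centroid so the tree has logarithmic radius --- and that $\lceil\log_2(\max\{|V_1|,|V_2|\})\rceil + 1$ layers exactly suffice, with the ``$+1$'' absorbing the final consolidation step. A secondary subtlety is verifying \khc's own correctness claims (that $\tricode$ extends to labeled triconnected graphs, that the whole pipeline yields a complete invariant including node colors and edge features) --- but these I would cite from \citet{KuklukLD04, Weinberg66} rather than reprove. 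The injectivity lemmas are routine given \citet{XuHLJ19}.
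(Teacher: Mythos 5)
Your Steps (1)–(3) closely mirror the paper's Lemma~\ref{lem:bi-tri}: injective sum-MLP aggregation in the style of \citet{XuHLJ19}, alignment of \trienc with Weinberg's walk code, and a within-layer bottom-up induction over the SPQR tree for \bienc. That part is sound and matches the paper. The problem is Step (4), where you misidentify the mechanism that forces the logarithmic depth bound.

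You correctly observe that the \cutenc bottom-up sweep happens entirely inside one layer, and then, searching for a reason the $\log$ budget is still needed, pivot to a ``lateral propagation'' picture: a node needs iterated global readouts to learn about the other half of the Block-Cut tree. That is not the bottleneck. The root of $\delta$ already receives $\widebar{\vh}_{\delta_{\canonicalroot(\delta)}}^{(\ell)}$ at every layer, and its subtree \emph{is} the whole tree; the pooling merely needs to pick that value out. The actual obstruction is the \emph{override cascade}. In \khc, before $\bicode(B)$ is computed for a non-leaf block $B$, each cut-node child $v$ of $B$ has its color overridden by $\code(\delta_v)$. In \baseplane, $\widetilde{\vh}_{B}^{(\ell)}$ is computed by \bienc from $\widehat{\vh}_{C}^{(\ell)}$, which \trienc builds from the \emph{previous-layer} node embeddings $\vh_v^{(\ell-1)}$ --- not from the same-layer $\widebar{\vh}_{\delta_v}^{(\ell)}$. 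So for the override to be respected, $\code(\delta_v)$ must already be encoded in $\vh_v^{(\ell-1)}$ before layer $\ell$ starts, and the within-layer \cutenc sweep cannot short-circuit this because it only updates $\widebar{\vh}$'s, never $\vh^{(\ell-1)}$'s. This is precisely what the paper's Lemma~\ref{lem:plane_log} tracks (their property (3)).

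Getting from linear to logarithmic then requires a second idea that your sketch does not contain: the paper stores, per cut node, the number of cut nodes in its induced subtree (their property (1)), and notes that if $u$'s subtree has fewer than $2^{L-1}$ cut nodes then at most \emph{one} grandchild of $u$ can still be ``improper'' at stage $L-1$. These improper grandchildren therefore form a single heavy-light-style chain, and a single \cutenc layer can fix the whole chain because the inner MLP can identify the unique heavy grandchild by its stored count and patch in that one missing override manually, using the same-layer $\widebar{\vh}$ of the grandchild. This is what doubles the resolved count each layer. Your proposed ``distance-doubling'' invariant is a different quantity, does not engage with the override constraint, and is also structurally unnecessary: since $\delta$ is rooted at a centroid, its depth is already $O(\log|V|)$, so if the constraint were really distance-to-$u$, a linear one-level-per-layer propagation would already give the bound --- no doubling needed. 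In short, the $\log$ in the paper comes from bounding cut-node \emph{counts} via HLD chains, not tree \emph{distances}, and the inter-layer dependency that creates the budget in the first place is the override propagating through $\vh_v^{(\ell-1)}$ into \trienc/\bienc, not into \cutenc.
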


The construction is non-uniform, since the number of layers needed depends on the size of the input graphs. In this respect, our result is similar to other results aligning GNNs with 1-WL with sufficiently many layers~\cite{MorrisRFHLRG19,XuHLJ19}. There are, however, two key differences: (i) \baseplane computes isomorphism-complete invariants over planar graphs and (ii) our construction requires only logarithmic number of layers in the size of the input graphs (as opposed to linear).

The theorem builds on the properties of each encoder being complete. We first show that a single application of \trienc and \bienc is sufficient to encode all relevant components of an input graph in an isomorphism-complete way:

\begin{lemma}
\label{lem:bi-tri}
Let $G=(V, E, \zeta)$ be a planar graph. Then, for any biconnected components $B, B'$ of $G$, and for any SPQR components $C$ and $C'$ of $G$, 
there exists  a parametrization of the functions \trienc and \bienc such that: 
\begin{enumerate}[(i)]
    \item $\widetilde{\vh}_{B} \neq  \widetilde{\vh}_{B'}$ if and only if  $B$ and $B'$ are not isomorphic, and
    \item $\widehat{\vh}_{C} \neq  \widehat{\vh}_{C'}$ if and only if  $C$ and $C'$ are not isomorphic.
\end{enumerate}
\end{lemma}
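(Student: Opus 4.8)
## Proof Proposal

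The plan is to prove the two claims together by a joint induction on the structure of the SPQR tree, establishing a stronger statement: that $\trienc$ and $\bienc$ can be parametrized so that for every SPQR component $C$ the embedding $\widehat{\vh}_C^{(\ell)}$ injectively encodes (up to isomorphism) the labeled triconnected component corresponding to $C$, and that for every node $C$ of an SPQR tree $\gamma = \spqr(B)$ the subtree embedding $\widetilde{\vh}_{\gamma_C}^{(\ell)}$ injectively encodes the labeled subtree $\gamma_C$ together with the gluing data. Statement (ii) is then the base-level instance of the first part, and statement (i) follows by applying the second part at the canonical root, since $\widetilde{\vh}_B = \widetilde{\vh}_{\gamma_{\canonicalroot(\gamma)}}$ and, by the classical theory (Whitney's two-embedding theorem and the uniqueness of the SPQR decomposition), two biconnected planar graphs are isomorphic if and only if their canonically-rooted, labeled SPQR trees are isomorphic as rooted labeled trees.

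First I would handle $\trienc$. The key input is that Weinberg's algorithm, extended to labeled graphs as described in the $\tricode$ paragraph, produces from a triconnected planar component $T$ a canonical walk $\omega$ and its visitation sequence $\kappa$; crucially, the pair $(\omega, \kappa)$ together with the node labels determines $T$ up to isomorphism, and conversely isomorphic components yield the same canonical $(\kappa, \text{label-sequence})$ data (after taking the lexicographic minimum over the constantly-many starting choices / two embeddings). For $S$, $P$, $Q$ nodes the analogous canonical ordering is the trivial one described in the $\bicode$ paragraph. So it suffices to show that the formula $\widehat{\vh}_C^{(\ell)} = \mlp\big(\sum_i \mlp(\vh_{\omega[i]}^{(\ell-1)} \| \vp_{\kappa[i]} \| \vp_i)\big)$ injectively encodes the finite sequence $\big((\vh_{\omega[i]}^{(\ell-1)}, \kappa[i], i)\big)_{i=1}^{|\omega|}$. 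The inner $\mlp$ can be chosen to map each triple $(h, \kappa[i], i)$ to a vector that is a ``digit'' in a positional number system — here I would invoke the standard fact (used pervasively in the GNN-expressiveness literature) that on any finite set of inputs an MLP can realize an injective map into a set of rationals whose sum is injective over multisets, strengthened here because the position $i$ is itself encoded so the sum recovers the ordered sequence, not just the multiset. The outer $\mlp$ then decodes. Since there are only finitely many components across the two graphs $G_1, G_2$, the relevant input domain is finite and all of this is realizable.

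Next, $\bienc$: here I would induct on the height of the subtree $\gamma_C$. The formula $\widetilde{\vh}_{\gamma_C}^{(\ell)} = \mlp\big(\widehat{\vh}_C^{(\ell)} + \sum_{C' \in \chi(C)} \mlp(\widetilde{\vh}_{\gamma_{C'}}^{(\ell)} \| \vp_{\theta(C,C')})\big)$ must be shown to injectively encode the pair consisting of $\widehat{\vh}_C^{(\ell)}$ (which by the $\trienc$ step determines the component at $C$) and the multiset $\{\!\{(\widetilde{\vh}_{\gamma_{C'}}^{(\ell)}, \theta(C,C'))\}\!\}$ over children. By the inductive hypothesis each $\widetilde{\vh}_{\gamma_{C'}}^{(\ell)}$ determines $\gamma_{C'}$ up to rooted-labeled-tree isomorphism, and $\theta(C,C')$ records how the child is glued; the standard multiset-injectivity argument for MLP-sum aggregation then gives that $\widetilde{\vh}_{\gamma_C}^{(\ell)}$ determines $\gamma_C$. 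The only subtlety is that this is a single layer $\ell$, yet the recursion runs over the whole SPQR tree — but this is fine because $\bienc$ is defined to be evaluated bottom-up within layer $\ell$ (all $\widehat{\vh}_C^{(\ell)}$ are computed from $\vh^{(\ell-1)}$ first, then the $\widetilde{\vh}$'s are filled in by a single bottom-up sweep), so the recursion is internal to the encoder and not spread across layers. Conversely, isomorphic biconnected components have isomorphic canonically-rooted SPQR trees, hence the bottom-up computation produces equal embeddings at every corresponding node, giving the ``only if'' directions of both (i) and (ii).

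The main obstacle I anticipate is the realizability of the injective sequence/multiset encodings \emph{simultaneously} and \emph{with a single fixed parametrization} that works for all the (finitely many) components of $G_1$ and $G_2$ at once, including making the nested $\mlp$'s compose correctly — in particular arranging that the codomain of the inner $\mlp$ in $\trienc$ and in $\bienc$ lands in a regime where the outer $\mlp$ can invert the sum, and that the positional embeddings $\vp_x$ (which are fixed, not learned, and indexed up to the maximum component size) are distinct enough to serve as the needed ``digits.'' This is the same bookkeeping that appears in proofs that GIN-style architectures are as powerful as 1-WL (Xu et al.), lifted to ordered sequences; I would cite that lemma and carry out the composition carefully rather than re-deriving it. A secondary point requiring care is confirming that the labeled extension of Weinberg's canonical code is genuinely a complete invariant of labeled triconnected planar graphs and that the canonical rooting of the SPQR tree (via centroids, as set up for labeled trees in the preliminaries) is isomorphism-invariant — both are folklore but should be stated explicitly, the former following from Whitney's theorem that a triconnected planar graph has a unique embedding up to reflection.
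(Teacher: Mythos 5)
Your proposal takes essentially the same route as the paper's proof: align the encoders with the \khc codes (Weinberg's canonical walk $(\omega,\kappa)$ for \trienc, the bottom-up SPQR subtree recursion with $\theta$ for \bienc), reduce each step to injectively encoding an ordered sequence or a multiset over a finite/countable and bounded-size domain, and invoke the Xu et al.\ sum-decomposition lemma plus universal approximation to realize the required injections with MLPs; the paper just packages the sequence/multiset as an explicit set $S_C$ and states the bijection $\rho$ between codes and representations a bit more formally. You also correctly flag the same subtleties the paper handles — finiteness of the relevant domain, the within-layer bottom-up sweep for \bienc, and the need for the labeled Weinberg code and centroid rooting to be genuine complete invariants.
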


Intuitively, this result follows from a natural alignment to the procedures of the \khc algorithm: the existence of unique codes for different components is proven for the algorithm and we lift this result to the embeddings of the respective graphs, using the universality of MLPs~\cite{HornikSW89,Hornik91,Cybenko89}.

Our main result rests on a key result related to \cutenc, which states that \baseplane computes complete graph invariants for all subtrees of the Block-Cut tree. We use an inductive proof, where the logarithmic bound stems from a single layer computing complete invariants for all subtrees induced by cut nodes that have at most one grandchild cut node, the induced subtree of which is incomplete.

\begin{lemma}
\label{lem:plane_log}
For a planar graph $G = (V, E, \zeta)$ of order $n$ and its associated Block-Cut tree $\delta = \blockcut(G)$, there exists a $L = \lceil \log_2(n) \rceil + 1$ layer parametrization of \baseplane that computes a complete graph invariant for each subtree $\delta_u$ induced by each cut node $u$.
\end{lemma}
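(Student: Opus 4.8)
The plan is to prove \Cref{lem:plane_log} by induction on the layer index $\ell$, showing that after $\ell$ layers the representation $\widebar{\vh}_{\delta_u}^{(\ell)}$ is a complete invariant for $\delta_u$ provided the subtree $\delta_u$ is ``not too deep'' in a suitable sense. First I would fix, once and for all, a parametrization: by \Cref{lem:bi-tri} (applied with sufficiently high-dimensional embeddings and then combined across layers via direct sums/concatenation) I may assume that for every layer $\ell \geq 1$, the biconnected-component embeddings $\widetilde{\vh}_B^{(\ell)}$ form a complete invariant on biconnected components, and moreover that they carry along the (complete) information of which nodes $v$ appear in $B$ together with whatever node-level information has been accumulated so far — this is what makes the \cutenc recursion meaningful. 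The key structural observation, exactly as in \khc, is that $\delta_u$ is determined up to isomorphism by the cut node $u$ itself, together with the multiset of pairs $(B, \{\delta_v : v \in \chi(B)\})$ over biconnected components $B \in \chi(u)$, where each $\delta_v$ is the subtree rooted at a grandchild cut node $v$. Since \cutenc computes
\begin{align*}
    \widebar{\vh}_{\delta_u}^{(\ell)} =  \mlp \left(
        \vh_u^{(\ell-1)} +
        \sum_{B \in \chi(u)} \mlp \left(
            \widetilde{\vh}_B^{(\ell)} +
            \sum_{v \in \chi(B)} \widebar{\vh}_{\delta_v}^{(\ell)}
        \right)
    \right),
\end{align*}
an injectivity argument on multisets (universality of MLPs on finite multisets of bounded-norm inputs, as used for \Cref{lem:bi-tri}) shows that $\widebar{\vh}_{\delta_u}^{(\ell)}$ injectively encodes this data \emph{as long as} the inner arguments $\widebar{\vh}_{\delta_v}^{(\ell)}$ are already complete invariants of the grandchildren subtrees $\delta_v$.

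The heart of the argument is then the ``one layer halves the depth'' claim stated informally before the lemma. I would define, for a cut node $u$, a notion of \emph{bad depth}: the length of the longest path $u = v_0, v_1, v_2, \dots$ of cut nodes, each $v_{i+1}$ a grandchild of $v_i$ in $\delta$, along which every subtree is still ``incomplete''. The induction hypothesis is: after $\ell$ layers, $\widebar{\vh}_{\delta_u}^{(\ell)}$ is a complete invariant for every cut node $u$ whose bad depth is at most $2^{\ell-1}$ (or a similar threshold). The inductive step uses the following contraction: consider the ``cut-node grandchild tree'' $\delta'$ obtained from $\delta$ by contracting each cut-node–biconnected-component–cut-node path to a single edge between the two cut nodes. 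A cut node is a leaf of $\delta'$ exactly when all its biconnected-component children have no grandchild cut nodes, i.e., its subtree is already a depth-one object and hence made complete by layer $1$ (via \Cref{lem:bi-tri} alone, since $\sum_{v \in \chi(B)} \widebar{\vh}_{\delta_v}^{(\ell)}$ vanishes). The crucial point is that in one application of \cutenc a cut node becomes ``resolved'' (its embedding complete) once \emph{all} of its grandchildren are resolved; but the subtlety flagged in the text is that a chain can persist because a resolved cut node might still have one unresolved grandchild chain hanging off it. The standard fix is to observe that the number of rounds needed equals the \emph{height} of $\delta'$ (rooted at $\canonicalroot(\delta)$) measured in this contracted metric, and that because $\canonicalroot(\delta)$ is a centroid, every root-to-leaf path in the full tree $\delta$ has length at most $n/2$ cut-node steps... but more carefully, we need the height of $\delta'$, which by a centroid/branch-size argument is $O(\log n)$: each time we go two cut-node levels deeper, the subtree size halves relative to the sibling branches, so the contracted height is at most $\lceil \log_2 n \rceil$, and the $+1$ absorbs the boundary/leaf layer.

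I would then assemble the pieces: by the inductive claim, after $L = \lceil \log_2 n\rceil + 1$ layers every cut node's subtree embedding is complete, in particular the root's, which (together with the reformulation of \khc's bottom-up coding as rooted-subtree coding given in the ``primer'') yields a complete graph invariant for $G$ — establishing \Cref{lem:plane_log} and, after noting the final pooling reads off the layer-$L$ node embeddings which in turn incorporate $\widebar{\vh}_{\delta_u}^{(L)}$, \Cref{thm:main}. The main obstacle I anticipate is making the depth/contraction bookkeeping rigorous: precisely defining the ``incomplete chain'' measure, proving that one \cutenc layer strictly decreases it by a constant factor rather than just by one, and relating that measure to $\log_2 n$ via the centroid property of the canonical root. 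A secondary technical point is ensuring the embeddings stay in a compact set across all $L$ layers so that the finite-multiset universality of MLPs can be invoked uniformly at every layer; this is routine (the relevant inputs range over a finite set once $G_1, G_2$ are fixed, matching the non-uniform statement of \Cref{thm:main}) but must be stated.
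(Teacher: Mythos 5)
Your high-level skeleton is sound: induct on the number of layers, treat the grandchild-to-grandchild structure of the Block-Cut tree as the unit of recursion, lean on \Cref{lem:bi-tri} for the inner \bienc/\trienc encodings, and recognize that the central combinatorial obstacle is a persisting chain of ``not yet resolved'' cut nodes. But the bookkeeping you propose to close the gap --- a ``bad depth'' measure and a centroid/branch-size bound on the height of the contracted grandchild tree $\delta'$ --- does not work, and that is exactly where the hard part of the lemma lives.

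Two concrete problems. First, ``the number of rounds needed equals the height of $\delta'$'' would give a \emph{linear}, not logarithmic, layer bound: for a Block-Cut tree that is a long path of alternating cut nodes and biconnected components, $\delta'$ has height $\Theta(n)$ even when rooted at a centroid (centroid rooting only caps the height by $n/2$, not $\log n$). Second, the assertion that ``each time we go two cut-node levels deeper, the subtree size halves relative to the sibling branches'' is not implied by the centroid property; in an HLD-style decomposition subtree size only halves across \emph{light} edges, and a heavy chain can be arbitrarily long. So neither estimate you invoke establishes the $\lceil\log_2 n\rceil+1$ bound.

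What the paper's proof does instead, and what your write-up is missing, is to track progress via the \emph{number of cut nodes} in $\delta_u$ rather than via a depth. The invariant after $\ell$ layers is that every cut node $u$ with \emph{fewer than $2^{\ell-1}$} cut nodes in $\delta_u$ has a subtree representation bijective to its $\khc$ code. This measure is what makes the chain argument go through: if $u$ has $< 2^{L-1}$ cut nodes in its subtree and is still unresolved, then each unresolved grandchild must have $\geq 2^{L-2}$ cut nodes in its own subtree (else it would have been handled by layer $L-1$), so $u$ can have \emph{at most one} such grandchild. Repeating this yields a single chain $v_0,\dots,v_k$ of unresolved nodes, and --- crucially, and this is the mechanism you gesture at but do not pin down --- because \cutenc uses a \emph{same-layer} bottom-up recursion ($\widebar{\vh}_{\delta_v}^{(\ell)}$ appears on the right-hand side at the same $\ell$), a single layer can march up the entire chain: it identifies the unique unresolved grandchild via a reserved dimension holding the cut-node count (property~(1) in the paper's proof), and combines it with the already-resolved siblings via the inner/outer MLPs. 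One layer therefore does not ``halve the depth''; it resolves a whole chain, and the counting invariant is what guarantees there is only one chain to resolve per node and that $\lceil\log_2 n\rceil+1$ layers suffice. Replacing your centroid/height argument with this cut-node-count invariant, and making the same-layer chain-resolution step explicit, is what turns your sketch into the paper's proof.
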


With \Cref{lem:bi-tri} and \Cref{lem:plane_log} in place, \Cref{thm:main} follows from the fact that every biconnected component and every cut node of the graph are encoded in an isomorphism-complete way, which is sufficient for distinguishing planar graphs. 

\textbf{Runtime efficiency.} Theoretically, the runtime of one BasePlanE layer is $\mathcal{O}(|V|d^2)$ with a (one-off) pre-processing time $\mathcal{O}(|V|^2)$ as we elaborate in \Cref{app:runtime}. In practical terms, this makes BasePlanE linear in the number of graph nodes after preprocessing. This is very scalable as opposed to other complete algorithms based on 3-WL (or, oblivious 4-WL). 

\section{Experimental evaluation}
\label{sec:experiments}
In this section, we evaluate \baseplane in three different settings. First, we conduct three experiments to evaluate the expressive power of \baseplane. Second, we evaluate a \baseplane variant using edge features on the MolHIV graph classification task from OGB \cite{HuFZDRLCL20, ZhenqinBNJCSKV18}. Finally, we evaluate this variant on graph regression over ZINC \cite{DwivediJLBB23} and QM9 \cite{Brockschmidt20,RaghunathanOMA14}. We provide an additional experiment on the runtime of \baseplane in \Cref{app:runtime}, and an ablation study on ZINC in \Cref{app:zinc-ablation}. All experimental details, including hyperparameters, can be found in \Cref{app:details}\footnote{Across all experiments, we present tabular results, and follow a convention in which the \textbf{best} result is bold in \textbf{black}, and the \textbf{\color{gray} second best} result is shown in bold in \textbf{\color{gray} gray}.}.

\subsection{Expressiveness evaluation}
\label{sec:exp}

\subsubsection{Planar satisfiability benchmark: EXP}
\label{sec:planar-sat}
\begin{wraptable}{r}{0.36\textwidth}
    \vspace{-3mm}
    \caption{Accuracy results on EXP. Baselines are from \citet{AbboudCGL21}.}
    \label{tab:exp}
    \centering
	\begin{tabular}{ll} 
	   \toprule
        Model & Accuracy (\%)\\
		\midrule
            GCN &  $50.0 \msmall{\pm 0.00}$ \\
		GCN-RNI(N) & $98.0 \msmall{\pm 1.85}$  \\ 
		3-GCN & $\color{gray}\mathbf{99.7} \msmall{\pm 0.004}$ \\
        \midrule
        \baseplane & $\mathbf{100} \msmall{\pm 0.00}$ \\
	\bottomrule
	\end{tabular}
\end{wraptable}
\textbf{Experimental setup.} We evaluate \baseplane  on the planar EXP benchmark \cite{AbboudCGL21} and compare with standard MPNNs, MPNNs with random node initialization and (higher-order) 3-GCNs \cite{MorrisRFHLRG19}. EXP consists of planar graphs which each represent a satisfiability problem (SAT) instance. These instances are grouped into pairs, such that these pairs cannot be distinguished by 1-WL, but lead to different SAT outcomes. The task in EXP is to predict the satisfiability of each instance. To obtain above-random performance on this dataset, a model must have a sufficiently strong expressive power (2-WL or more). 
To conduct this experiment, we use a 2-layer \baseplane model with 64-dimensional node embeddings. We instantiate the triconnected component encoder with 16-dimensional positional encodings, each computed using a periodicity of 64.

\textbf{Results.} 
All results are reported in \Cref{tab:exp}. As expected, \baseplane perfectly solves the task, achieving a performance of 100\% (despite not relying on any higher-order method). \baseplane solely relies on classical algorithm component decompositions, and does not rely on explicitly selected and designed features, to achieve this performance gain. This experiment highlights that the general algorithmic decomposition effectively improves expressiveness in a practical setup, and leads to strong performance on EXP, where a standard MPNN would otherwise fail.

\subsubsection{Planar 3-regular graphs: P3R}
\begin{wraptable}{r}{0.36\textwidth}
    \vspace{-5mm}
    \caption{Accuracy results on P3R.}
    \label{tab:p3r}
    \centering
    \begin{tabular}{ll}
    \toprule
    Model & Accuracy (\%) \\
    \midrule
    GIN   & 11.1 \msmall{\pm 0.00} \\
    PPGN & $\mathbf{100}$ \msmall{\pm 0.00} \\        
    \midrule
    \baseplane & $\mathbf{100}$ \msmall{\pm 0.00}\\
    \bottomrule
    \end{tabular}
\end{wraptable}
\textbf{Experimental setup.} We propose a new synthetic dataset P3R based on 3-regular planar graphs, and experiment with \baseplane, GIN and 2-WL-expressive PPGN \cite{MaronHSL19}. For this experiment, we generated all 3-regular planar graphs of size 10, leading to exactly 9 non-isomorphic graphs. For each such graph, we generated 50 isomorphic graphs by permuting their nodes. The task is then to predict the correct class of an input graph, where the random accuracy is approximately $11.1\%$. This task is challenging given the regularity of the graphs.

\textbf{Results.} We report all results in \Cref{tab:p3r}. As expected, GIN struggles to go beyond a random guess, whereas \baseplane and PPGNs easily solve the task, achieving 100\% accuracy.

\subsubsection{Clustering coefficients of QM9 graphs:  \qmsub}
\label{sec:cc}

In this experiment, we evaluate the ability of \baseplane to detect structural graph signals \emph{without} an explicit reference to the target structure. To this end, we propose a simple, yet challenging, synthetic task: given a subset of graphs from QM9, we aim to predict the graph-level \emph{clustering coefficient (CC)}. Computing CC requires counting triangles in the graph, which is impossible to solve with standard MPNNs \cite{YouSYL21}. 


\textbf{Data.} We select a subset \qmsub of graphs from QM9 to obtain a diverse distribution of CCs. As most QM9 graphs have a CC of 0, we consider graphs with a CC in the interval $[0.06, 0.16]$, as this range has high variability. We then normalize the CCs to the unit interval $[0, 1]$. We apply the earlier filtering on the original QM9 splits to obtain train/validation/test sets that are direct subsets of the full QM9 splits, and which consist of 44226, 3941 and 3921 graphs, respectively. 

\textbf{Experimental setup.} Given the small size of QM9 and the locality of triangle counting, we use 32-dimensional node embeddings and 3 layers across all models. Moreover, we use a common 100 epoch training setup for fairness. For evaluation, we report mean absolute error (MAE) on the test set, averaged across 5 runs. For this experiment, our baselines are (i) an input-agnostic constant prediction that returns a minimal test MAE, (ii) the MPNNs GCNs \cite{ThomasM17} and GIN \cite{XuHLJ19}, (iii) ESAN \cite{BevilacquaFLSCBBM22}, an MPNN that computes sub-structures through node and edge removals, but which does \emph{not} explicitly extract triangles, and (iv) \baseplane, using 16-dimensional positional encoding vectors. 

\begin{wraptable}{r}{0.35\textwidth}
    \vspace{-2mm}
    \caption{MAE of \baseplane and baselines on the \qmsub dataset.}
    \label{tab:cc_exp}
    \centering
	\begin{tabular}{ll} 
	   \toprule
        Model & MAE \\
		\midrule
	Constant & 0.1627 $\msmall{\pm 0.0000}$ \\ 
        GCN & 0.1275 $\msmall{\pm 0.0012}$ \\
        GIN & 0.0612 $\msmall{\pm 0.0018}$\\
        ESAN & $\color{gray}{\mathbf{0.0038}}$ $\msmall{\pm 0.0010}$\\
        \baseplane & $\mathbf{0.0023} \msmall{\pm 0.0004}$\\
		\bottomrule
	\end{tabular}
\end{wraptable}
\textbf{Results.} Results on \qmsub are provided in \Cref{tab:cc_exp}. 
\baseplane comfortably outperforms standard MPNNs. Indeed, GCN performance is only marginally better than the constant baseline and GIN's MAE is over an order of magnitude behind \baseplane. This is a very substantial gap, and confirms that MPNNs are unable to accurately detect triangles to compute CCs. 
Moreover, \baseplane achieves an MAE over 40\% lower than ESAN. Overall, \baseplane effectively detects triangle structures, despite these not being explicitly provided, and thus its underlying algorithmic decomposition effectively captures latent structural graph properties in this setting. 

\begin{figure}[ht]
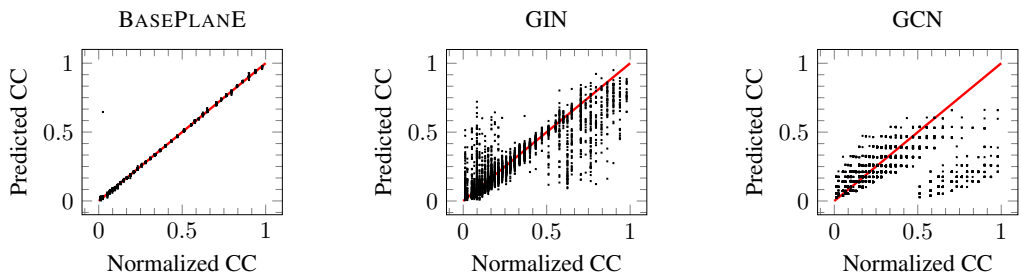

\centering
\begin{subfigure}[c]{.3\textwidth}
  \include{figures/scatter_plane}
\end{subfigure}%
\hfill
\begin{subfigure}[c]{.3\textwidth}
  \include{figures/scatter_gin}
\end{subfigure}
\hfill
\begin{subfigure}[c]{.3\textwidth}
  \include{figures/scatter_gcn}
\end{subfigure}
\caption{Scatter plots of \baseplane, GIN, and GCN predictions versus the true normalized CC. The red line represents ideal behavior where predictions match the true normalized CC.}
\label{fig:cc_exp_figures}
\end{figure}

\textbf{Performance analysis.} To better understand our results, we visualize the predictions of \baseplane, GIN, and GCN using scatter plots in \Cref{fig:cc_exp_figures}. As expected, \baseplane  follows the ideal regression line. By contrast, GIN and GCN are much less stable. Indeed, GIN struggles with CCs at the extremes of the $[0, 1]$ range, but is better at intermediate values, whereas GCN is consistently unreliable, and rarely returns predictions above 0.7. This highlights the structural limitation of GCNs, namely its self-loop mechanism for representation updates, 
which causes ambiguity for detecting triangles. 

\subsection{Graph classification on MolHIV}
\label{sec:classification}
\begin{wraptable}{r}{0.35\textwidth}
    \caption{ROC-AUC of \baseplane and baselines on MolHIV.}
    \label{tab:mol}
    \centering
  \begin{tabular}{ll}
  \toprule
  GCN \cite{ThomasM17} 
  & $75.58 \msmall{\pm0.97}$
  \\
  GIN \cite{XuHLJ19} 
  & $77.07 \msmall{\pm1.40}$
  \\
  PNA \cite{CorsoCBLV20}
  & $79.05 \msmall{\pm1.32}$
  \\
  \midrule
  ESAN \cite{BevilacquaFLSCBBM22} & $78.00 \msmall{\pm 1.42} $\\
  GSN \cite{BouritsasFZB20} 
  & $\color{gray}{\mathbf{80.39}} \msmall{\pm0.90} $
  \\
  CIN \cite{BodnarFOWLMB21} 
  & $\mathbf{80.94} \msmall{\pm0.57}$
  \\
  \midrule
  HIMP \cite{FeyYW20} 
  & $78.80 \msmall{\pm 0.82}$
  \\
  \midrule
  \ebaseplane
  & $80.04 \msmall{\pm 0.50}$\\
  \bottomrule
  \end{tabular}
\end{wraptable}

\textbf{Model setup.} We use a \baseplane variant that uses edge features, called \ebaseplane  (defined in the appendix), on OGB \cite{HuFZDRLCL20} MolHIV and compare against baselines. We instantiate \ebaseplane with an embedding dimension of 64, 16-dimensional positional encodings, and report the average ROC-AUC across 10 independent runs.

\noindent\textbf{Results.} The results for \ebaseplane and other baselines on MolHIV are shown in \Cref{tab:mol}: Despite not explicitly extracting relevant cycles and/or molecular sub-structures, \ebaseplane outperforms standard MPNNs and the domain-specific HIMP model. It is also competitive with substructure-aware models CIN and GSN, which include dedicated structures for inference. 
Therefore, \ebaseplane performs strongly in practice with minimal design effort, and effectively uses its structural inductive bias to remain competitive with dedicated architectures.
  
\subsection{Graph regression on QM9}
\label{sec:regression-qm9}
\noindent\textbf{Experimental setup.} We map QM9 \cite{RaghunathanOMA14} edge types into features by defining a learnable embedding per edge type, and subsequently apply \ebaseplane to the dataset. We evaluate \ebaseplane on all 13 QM9 properties following the same splits and protocol (with MAE results averaged over 5 test set reruns) of GNN-FiLM \cite{Brockschmidt20}. We compare R-SPN against GNN-FiLM models and their fully adjacent (FA) variants \cite{AlonY21}, as well as shortest path networks (SPNs) \cite{AbboudDC22}. We report results with an 3-layer \ebaseplane using 128-dimensional node embeddings and 32-dimensional positional encodings. 

\noindent\textbf{Results.} \ebaseplane results on QM9 are provided in \Cref{tab:qm9_8l}. In this table, \ebaseplane outperforms high-hop SPNs, despite being simpler and more efficient, achieving state-of-the-art results on 9 of the 13 tasks. The gains are particularly prominent on the first five properties, where R-SPNs originally provided relatively little improvement over FA models, suggesting that \ebaseplane offers complementary structural advantages to SPNs. 
This was corroborated in our experimental tuning: \ebaseplane performance peaks around 3 layers, whereas SPN performance continues to improve up to 8 (and potentially more) layers, which suggests that \ebaseplane is more efficient at directly communicating information, making further message passing redundant.

Overall, \ebaseplane maintains the performance levels of R-SPN with a smaller computational footprint. Indeed, messages for component representations efficiently propagate over trees in \ebaseplane, and the number of added components is small (see appendix for more details). Therefore \ebaseplane and the \plane framework offer a more scalable alternative to explicit higher-hop neighborhood message passing over planar graphs. 

\begin{table}[t]
\centering
\caption{MAE of \ebaseplane and baselines on QM9. Other model results and their fully adjacent (FA) extensions are as previously reported \cite{AlonY21,AbboudDC22}.} 
\label{tab:qm9_8l} 
\begin{tabular}{l@{\hskip 7pt}l@{\hskip 7pt}l@{\hskip 7pt}l@{\hskip 7pt}l@{\hskip 7pt}l@{\hskip 7pt}l@{\hskip 7pt}l}
 \toprule
 Property & \multicolumn{2}{c}{R-GIN} & \multicolumn{2}{c}{R-GAT} & \multicolumn{2}{c}{R-SPN} & \baseplane \\
 \cmidrule(r){2-3}
 \cmidrule(r){4-5}
 \cmidrule(r){6-7}
 \cmidrule(r){8-8}
 & base & +FA & base & +FA & $k=5$ & $k=10$ & \\

 \midrule
 mu & $2.64\msmall{\pm 0.11}$ & $2.54 \msmall{\pm 0.09}$ & $2.68 \msmall{\pm 0.11}$ & $2.73\msmall{\pm 0.07}$ & $\color{gray}{\mathbf{2.16}}\msmall{\pm 0.08}$ & $2.21\msmall{\pm 0.21}$ 
 & $\mathbf{1.97}\msmall{\pm 0.03}$
 \\
 
 alpha & $4.67 \msmall{\pm 0.52}$ & $2.28\msmall{\pm 0.04}$ & $4.65 \msmall{\pm 0.44}$ & $2.32\msmall{\pm 0.16}$ & $1.74\msmall{\pm 0.05}$ & $\color{gray}{\mathbf{1.66}}\msmall{\pm 0.06}$
 
 & $\mathbf{1.63}\msmall{\pm 0.01}$
 \\
 
 HOMO & $1.42 \msmall{\pm 0.01}$ & $1.26\msmall{\pm 0.02}$ & $1.48 \msmall{\pm 0.03}$ & $1.43\msmall{\pm 0.02}$ & $\color{gray}{\mathbf{1.19}}\msmall{\pm 0.04}$ & $1.20\msmall{\pm 0.08}$
 
 & $\mathbf{1.15}\msmall{\pm 0.01}$
 \\
 
 LUMO & $1.50 \msmall{\pm 0.09}$ & $1.34\msmall{\pm 0.04}$  & $1.53 \msmall{\pm 0.07}$ & $1.41\msmall{\pm 0.03}$ & $\color{gray}{\mathbf{1.13}}\msmall{\pm 0.01}$ & $1.20\msmall{\pm 0.06}$
 
 & $\mathbf{1.06}\msmall{\pm 0.02}$
 \\
 
 gap & $2.27 \msmall{\pm 0.09}$ & $1.96 \msmall{\pm 0.04}$ & $2.31 \msmall{\pm 0.06}$ & $2.08 \msmall{\pm 0.05}$ & $\color{gray}{\mathbf{1.76}}\msmall{\pm 0.03}$ & $1.77 \msmall{\pm 0.06}$
 
 & $\mathbf{1.73}\msmall{\pm 0.02}$
 \\
 
 R2 & $15.63 \msmall{\pm 1.40}$ & $12.61\msmall{\pm 0.37}$ & $52.39 \msmall{\pm 42.5}$ & $15.76\msmall{\pm 1.17}$ & $\color{gray}{\mathbf{10.59}}\msmall{\pm 0.35}$ & $10.63\msmall{\pm 1.01}$
 
 & $\mathbf{10.53}\msmall{\pm 0.55}$
 \\
 
 ZPVE & $12.93 \msmall{\pm 1.81}$ & $5.03\msmall{\pm 0.36}$ & $14.87 \msmall{\pm 2.88}$ & $5.98\msmall{\pm 0.43}$ & $3.16\msmall{\pm 0.06}$& $\mathbf{2.58}\msmall{\pm 0.13}$
 
 & $\color{gray}{\mathbf{2.81}}\msmall{\pm 0.16}$
 \\
 
 U0 & $5.88 \msmall{\pm 1.01}$ & $2.21\msmall{\pm 0.12}$ & $7.61 \msmall{\pm 0.46}$ & $2.19\msmall{\pm 0.25}$ & $1.10\msmall{\pm 0.03}$ & $\mathbf{0.89}\msmall{\pm 0.05}$
 
 & $\color{gray}{\mathbf{0.95}}\msmall{\pm 0.04}$
 \\
 
 U & $18.71 \msmall{\pm 23.36}$ & $2.32\msmall{\pm 0.18}$ & $6.86 \msmall{\pm 0.53}$ & $2.11\msmall{\pm 0.10}$ & $1.09\msmall{\pm 0.05}$ & $\mathbf{0.93}\msmall{\pm 0.03}$
 
 & $\color{gray}{\mathbf{0.94}}\msmall{\pm 0.04}$
 \\
 
 H & $5.62 \msmall{\pm 0.81}$ & $2.26\msmall{\pm 0.19}$ & $7.64 \msmall{\pm 0.92}$ & $2.27\msmall{\pm 0.29}$ & $1.10\msmall{\pm 0.03}$ & $\mathbf{0.92}\msmall{\pm 0.03}$
 
 & $\mathbf{0.92}\msmall{\pm 0.04}$
 \\
 
 G & $5.38 \msmall{\pm 0.75}$ & $2.04\msmall{\pm 0.24}$ & $6.54 \msmall{\pm 0.36}$ & $2.07\msmall{\pm 0.07}$ & $1.04\msmall{\pm 0.04}$ & $\mathbf{0.83}\msmall{\pm 0.05}$
 
 & $\color{gray}{\mathbf{0.88}}\msmall{\pm 0.04}$
 \\
 
 Cv & $3.53 \msmall{\pm 0.37}$ & $1.86\msmall{\pm 0.03}$ & $4.11 \msmall{\pm 0.27}$ & $2.03\msmall{\pm 0.14}$ & $1.34\msmall{\pm 0.03}$ & $\color{gray}{\mathbf{1.23}}\msmall{\pm 0.06}$
 
 & $\mathbf{1.20}\msmall{\pm 0.06}$
 \\
 
 Omega & $1.05 \msmall{\pm 0.11}$ & $0.80\msmall{\pm 0.04}$ &  $1.48 \msmall{\pm 0.87}$ & $0.73\msmall{\pm 0.04}$ & $0.53\msmall{\pm 0.02}$ & $\color{gray}{\mathbf{0.52}}\msmall{\pm 0.02}$
 
 & $\mathbf{0.45}\msmall{\pm 0.01}$
 \\
\bottomrule
\end{tabular} 
\end{table}

\subsection{Graph regression on ZINC}
\label{sec:regression-zinc}
\begin{wraptable}{r}{0.60\textwidth}
    \caption{MAE of (E-)\baseplane and baselines on ZINC.}
    \label{tab:zinc}
    \centering
  \begin{tabular}{l@{\hskip 3pt}c@{\hskip 3pt}c@{\hskip 3pt}c}
  \toprule
   & ZINC(12k) & ZINC(12k) & ZINC(Full) \\
   Edge Features & No & Yes & Yes\\
  \midrule
  GCN \cite{ThomasM17} 
  & $0.278 \msmall{\pm0.003}$
  & -
  & -
  \\
  GIN(-E) \cite{XuHLJ19,HuLGZLPL20} 
  & $0.387 \msmall{\pm0.015}$
  & $0.252 \msmall{\pm0.014}$
  & $0.088 \msmall{\pm0.002}$
  \\
  PNA \cite{CorsoCBLV20} 
  & $0.320 \msmall{\pm0.032}$
  & $0.188 \msmall{\pm0.004}$
  & $0.320 \msmall{\pm0.032}$
  \\
  \midrule
  GSN \cite{BouritsasFZB20} 
  & $0.140 \msmall{\pm0.006}$
  & $0.101 \msmall{\pm0.010}$
  & -
  \\
  CIN \cite{BodnarFOWLMB21} 
  & $\mathbf{0.115} \msmall{\pm0.003}$
  & $\color{gray}{\mathbf{0.079}} \msmall{\pm0.006}$
  & $\mathbf{0.022} \msmall{\pm0.002}$
  \\
  ESAN \cite{BevilacquaFLSCBBM22} & - & $0.102 \msmall{\pm0.003}$ & - \\
  \midrule
  HIMP \cite{FeyYW20} 
  & -
  & $0.151 \msmall{\pm0.006}$
  & $0.036 \msmall{\pm0.002}$
  \\
  \midrule
  (E-)\baseplane 
  & $\color{gray}{\mathbf{0.124}} \msmall{\pm 0.004}$
   & $\mathbf{0.076} \msmall{\pm0.003}$
  & $\color{gray}{\mathbf{0.028}} \msmall{\pm0.002}$
  \\
  \bottomrule
  \end{tabular}
  \end{wraptable}
\textbf{Experimental setup.}  We (i) evaluate \baseplane on the ZINC subset (12k graphs) without edge features,  (ii) evaluate \ebaseplane on this subset and on the full ZINC dataset (500k graphs). To this end, we run \baseplane and \ebaseplane with 64 and 128-dimensional embeddings, 16-dimensional positional embeddings, and 3 layers. For evaluation, we compute MAE on the respective test sets, and report the best average of 10 runs across all experiments. 

\noindent\textbf{Results.} Results on ZINC are shown in \Cref{tab:zinc}: Both \baseplane and \ebaseplane perform strongly, with \ebaseplane achieving state-of-the-art performance on ZINC12k with edge features and both models outperforming all but one baseline in the other two settings. 
These results are very promising, and highlight the robustness of (E-)\baseplane. 

\section{Limitations, discussions, and outlook} Overall, both \baseplane and \ebaseplane perform strongly across all our experimental evaluation tasks, despite competing against specialized models in each setting. Moreover, both models are isomorphism-complete over planar graphs. This implies that these models benefit substantially from the structural inductive bias and expressiveness of classical planar algorithms, which in turn makes them a reliable, efficient, and robust solution for representation learning over planar graphs.

Though the \plane framework is a highly effective and easy to use solution for planar graph representation learning, it is currently limited to planar graphs. 
Indeed, the classical algorithms underpinning \plane do not naturally extend beyond the planar graph setting, which in turn limits the applicability of the approach. Thus, a very important avenue for future work is to explore alternative (potentially incomplete) graph decompositions that strike a balance between structural inductive bias, efficiency and expressiveness on more general classes of graphs.

\clearpage

\begin{ack}
The authors would like to thank the anonymous reviewers for their feedback which led to substantial improvements in the presentation of the paper. The authors would like to also acknowledge the use of the University of Oxford Advanced Research Computing (ARC) facility in carrying out this work (http://dx.doi.org/10.5281/zenodo.22558).

\end{ack}

\bibliographystyle{plainnat}
\bibliography{bibliography}

\clearpage
\appendix

\section{Proofs of the statements}
\label{app:proofs}
In this section, we provide the proofs of the statements from the main paper. Throughout the proofs, we make the standard assumption that the initial node features are from a compact space $K \subseteq \sR^d$, for some $d\in \sN^+$. We also often need to canonically map elements of finite sets to the integer domain: given a finite set $S$, and any element $x\in S$ of this set, the function $\Psi(S, x): S \to \{1, \ldots, |S|\}$ maps $x$ to a unique integer index given by some fixed order over the set $S$. 
Furthermore, we also often need to injectively map sets into real numbers, which is given by the following lemma.
\begin{lemma}
\label{lem_abstract:code_map}
Given a finite set $S$, there exists an injective map $g : \sP(S) \to [0, 1]$.
\end{lemma}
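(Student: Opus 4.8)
The statement asks for an injective map $g : \mathcal{P}(S) \to [0,1]$ where $S$ is a finite set; since the ambient inclusion into $[0,1]$ appears in the excerpt as $\sP(S) \to [0,1]$, I will produce an explicit such map. The plan is to enumerate $S$ and encode each subset by a finite binary expansion (equivalently, a base-$b$ digit string), reading off membership bit by bit.

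\paragraph{Setup and construction.} First I would fix an ordering of $S$, say $S = \{s_1, \ldots, s_n\}$ with $n = |S|$, which is exactly the kind of canonical indexing provided by the map $\Psi(S, \cdot)$ mentioned just before the lemma. For a subset $T \subseteq S$ define the indicator bits $b_i = \mathbf{1}[s_i \in T] \in \{0,1\}$ for $1 \le i \le n$, and set
\begin{align*}
g(T) \;=\; \sum_{i=1}^{n} \frac{b_i}{2^{i}}.
\end{align*}
This is a well-defined element of $[0,1]$ since the sum is bounded above by $\sum_{i=1}^{n} 2^{-i} < 1$. (If one prefers to stay safely away from endpoint collisions one can use base $3$ instead of base $2$, i.e. $g(T) = \sum_i b_i 3^{-i}$, but base $2$ already suffices here because the expansions are finite and of fixed length $n$.)

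\paragraph{Injectivity.} The key step is verifying injectivity. Suppose $g(T) = g(T')$ for subsets $T, T'$ with indicator vectors $(b_i)$ and $(b_i')$. Then $\sum_{i=1}^{n} (b_i - b_i') 2^{-i} = 0$. Let $k$ be the smallest index where $b_k \neq b_k'$, if one exists; without loss of generality $b_k - b_k' = 1$. Then
\begin{align*}
0 \;=\; \sum_{i=k}^{n} (b_i - b_i') 2^{-i} \;\ge\; 2^{-k} - \sum_{i=k+1}^{n} 2^{-i} \;=\; 2^{-k} - (2^{-k} - 2^{-n}) \;=\; 2^{-n} \;>\; 0,
\end{align*}
a contradiction. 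Hence no such $k$ exists, so $(b_i) = (b_i')$, and since the ordering of $S$ is fixed this forces $T = T'$. Therefore $g$ is injective. Since $\mathcal{P}(S)$ is exactly the set of all such subsets, this completes the argument.

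\paragraph{Remark on difficulty.} There is essentially no hard part here: the only thing to be careful about is that finiteness of $S$ is genuinely used — for infinite $S$ the binary-expansion map would still be well-defined into $[0,1]$ but injectivity would fail at dyadic rationals unless one passes to base $3$, and even the counting argument changes. Since the lemma only claims the finite case, the fixed-length finite expansion argument above is clean and complete. The lemma will be invoked later to serialize the multiset-valued intermediate quantities (component codes, subtree codes) into scalars so that MLP universality arguments apply, so the explicitness of $g$ is all that matters.
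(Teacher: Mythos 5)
Your proof is correct, and it takes a different (and arguably cleaner) route than the paper's. The paper defines $g(M) = 1 \big/ \bigl(1 + \sum_{x \in M} \Psi(S, x)\, |S|^{\Psi(M, x)}\bigr)$, where $\Psi(S,\cdot)$ and $\Psi(M,\cdot)$ are the fixed indexings into $S$ and into $M$ respectively; in effect this encodes $M$ as a number in a bijective base-$|S|$ system (digits drawn from $\{1,\ldots,|S|\}$, so no leading-zero ambiguity) and then squashes it into $(0,1]$ via $x \mapsto 1/(1+x)$. The paper asserts injectivity as "clear" and leaves the reader to observe that distinct subsets yield distinct digit strings in this system. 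You instead use the textbook fixed-length binary expansion $g(T) = \sum_i b_i 2^{-i}$ over the indicator vector of $T$, and you actually supply the injectivity argument (leading-disagreement index, bounded tail). This is simpler, avoids the reciprocal transformation, and makes the "clear" step explicit — the only caveat being your own remark that this relies essentially on the finite, fixed-length expansion, which is exactly the hypothesis of the lemma. Either map serves the lemma's downstream purpose (serializing bounded collections of codes into scalars so MLP universality applies), so the difference is purely stylistic, and your version is more self-contained.
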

\begin{proof}
For each subset $M \subseteq S$, consider the mapping:
\begin{align*}
g(M) = \frac{1}{1 + \sum_{x \in M} \Psi(S, x) |S|^{\Psi(M, x)}},
\end{align*}
which clearly satisfies $g(M_1) \neq g(M_2)$ for any $M_1 \neq M_2 \subseteq S$. 
\end{proof}

We now provide proofs for \Cref{lem:bi-tri}, \Cref{lem:plane_log} which are essential for \Cref{thm:main}. Let us first prove \Cref{lem:bi-tri}:

\noindent\textbf{Lemma 6.2.} \emph{Let $G=(V, E, \zeta)$ be a planar graph. Then, for any biconnected components $B, B'$ of $G$, and for any SPQR components $C$ and $C'$ of $G$, 
there exists  a parameterization of the functions \trienc and \bienc such that: 
\begin{enumerate}[(i)]
    \item $\widetilde{\vh}_{B} \neq  \widetilde{\vh}_{B'}$ if and only if  $B$ and $B'$ are not isomorphic, and
    \item $\widehat{\vh}_{C} \neq \widehat{\vh}_{C'}$ if and only if  $C$ and $C'$ are not isomorphic.
\end{enumerate}
}
\begin{proof}[Proof]
We show this by first giving a parameterization of \trienc to distinguish all SPQR components, and then use this construction to give a parameterization of \bienc to distinguish all biconnected components. The proof aligns the encoders with the code generation procedure in the \khc algorithm. The parameterization needs only a single layer, so we will drop the superscripts in the node representations and write, e.g., $\vh_u$, for brevity. The construction yields single-dimensional real-valued vectors, and, for notational convenience, we view the resulting representations as reals.

\paragraph{Parameterizing \trienc.}
We initialize the node features as $\vh_u = \zeta(u)$, where $\zeta:V\to \sR^d$. 
Given an SPQR component $C$ and the initial node representations $\vh_u$ of each node $u$, \trienc encodes $C$ based on the walk $\omega$ given by Weinberg's algorithm, and its corresponding sequence $\kappa$ as:
\begin{align}
\label{eq:trienc}
    \widehat{\vh}_{C} =  \mlp \left(
    \sum_{i=1}^{|\omega|} \mlp \left(
            \vh_{\omega[i]} \| \vp_{\kappa[i]} \| \vp_i
            \right)
    \right). 
\end{align}
In other words, we rely on the walks $\omega$ generated by Weinberg's algorithm: we create a walk on the triconnected components that visits each edge exactly twice. Let us fix $n = 4(|V| + |E| + 1)$,  which serves as an upper bound for size of the walks $\omega$, and the size of the walk-induced sequence $\kappa$. 

We show a bijection between the multiset of codes $\mathcal{F}$ (given by the \khc algorithm), and the multiset of representations $\mathcal{M}$ (given by \trienc), where the respective multisets are defined, based on $G$, as follows:
\begin{enumerate}[$\bullet$]
    \item $\mathcal{F} = \{\!\!\{ \code(C) \mid C \in \sigma^G \}\!\!\}$: the multiset of all codes of the SPQR components of $G$.
    \item $\mathcal{M}= \{\!\!\{ \widehat{\vh}_C \mid C \in \sigma^G \}\!\!\}$: the multiset of the representations of the SPQR components of $G$.
\end{enumerate}
Specifically, we prove the following:
\begin{claim}
\label{claim:tri-enc}
There exists a bijection $\rho$ between $\mathcal{F}$ and $\mathcal{M}$ such that, for any SPQR component $C$:
\[
\rho(\widehat{\vh}_C)= \code(C) \text{ and }\rho^{-1}(\code(C))= \widehat{\vh}_C.
\]
\end{claim}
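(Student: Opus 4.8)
The plan is to align the computation of \trienc with the code-generation procedure of \khc on SPQR components, and to exhibit a choice of parameters for which $C \mapsto \widehat{\vh}_C$ separates exactly the isomorphism classes among the SPQR components of $G$, i.e.\ $\widehat{\vh}_C = \widehat{\vh}_{C'}$ if and only if $\code(C) = \code(C')$. Granting this, the reduction to the claim is immediate: both $\mathcal{F}$ and $\mathcal{M}$ are indexed by the finite set $\sigma^G$, and since \khc already guarantees that $\code$ is a complete invariant on SPQR components, for every $C$ the number of elements of $\sigma^G$ with code $\code(C)$ equals the number with representation $\widehat{\vh}_C$; hence $\rho(\widehat{\vh}_C) := \code(C)$ is a well-defined bijection between the multisets, with $\rho^{-1}(\code(C)) = \widehat{\vh}_C$.

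First I would record the data each side depends on. By the overview of \khc (Section~4), for an SPQR component $C$ the code $\code(C)$ is built from the canonical Weinberg walk $\omega$ (the trivial canonical ordering for $S$, $P$, $Q$ nodes, the lexicographically smallest Weinberg walk for $R$ nodes) together with its first-visit sequence $\kappa$, as the sequence $\langle(\kappa[i], \code((\{\omega[i]\}, \{\})))\rangle_{i=1}^{|\omega|}$; since $\code((\{u\},\{\})) = \code((\{v\},\{\}))$ iff $\zeta(u) = \zeta(v)$, the code is determined by and determines $\langle(\kappa[i], \zeta(\omega[i]))\rangle_i$. As $G$ is fixed, the colour set $\mathcal{C}_0 = \{\zeta(u) \mid u \in V\}$ is finite, so we may fix an injective $\theta : \mathcal{C}_0 \to \{1, \dots, |\mathcal{C}_0|\}$ via $\Psi$. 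Taking \trienc to use the same canonical walk as \khc makes $\widehat{\vh}_C$ a function of $\langle(\kappa[i], \zeta(\omega[i]))\rangle_i$ as well, hence an isomorphism invariant of $C$, which already gives $\code(C) = \code(C') \Rightarrow C \cong C' \Rightarrow \widehat{\vh}_C = \widehat{\vh}_{C'}$. For the converse I would realise $\widehat{\vh}_C$ as an injective encoding of that sequence through the positional-encoding and sum-pooling mechanism of \trienc: with $n = 4(|V|+|E|+1)$ an upper bound on $|\omega|$, take the periodicity of the positional embeddings at least $n$ so that $\vp_1, \dots, \vp_n$ are pairwise distinct, fix a base $N$ large enough (say $N > n\,|\mathcal{C}_0|$), and let $f$ be a continuous map agreeing on the finite set of arguments arising from $G$ with $f(\zeta(\omega[i]) \| \vp_{\kappa[i]} \| \vp_i) = (\theta(\zeta(\omega[i])) + |\mathcal{C}_0|\,\kappa[i])\,N^{-i}$. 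Then $\sum_{i=1}^{|\omega|} f(\cdot)$ has a base-$N$ expansion from which $(\theta(\zeta(\omega[i])), \kappa[i])$ is read off at digit block $i$, so the sum is injective on these sequences and distinct sequences give sums separated by at least roughly $N^{-n}$. Since $G$ is fixed there are only finitely many such sequences, so by the universal approximation theorem~\cite{HornikSW89,Hornik91,Cybenko89} the inner \mlp can approximate $f$ and the outer \mlp the identity to within a fraction of that separation, preserving injectivity; this yields the required parameterisation of \trienc.

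The step I expect to be the main obstacle is reconciling the real-valued node features with the fact that an \mlp only approximates continuous maps: there is no continuous injection of a positive-dimensional compact colour set into $\sR$, so an injective single-real encoding is impossible uniformly over all inputs. The resolution, and the place to be careful, is that the construction is non-uniform — for the fixed graph $G$ the colours, and therefore the possible code sequences, form a finite set, on which an \mlp can realise the base-$N$ positional code to arbitrary precision; combined with the separation gap this makes the sum-pooling genuinely injective. A secondary point requiring care is ensuring that \trienc's walk selection coincides with \khc's canonical (colour-aware, lexicographically minimal) Weinberg walk, without which $\widehat{\vh}_C$ would not be a well-defined function of the isomorphism class and the forward implication could fail.
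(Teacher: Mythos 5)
Your proposal is correct and takes essentially the same approach as the paper: align \trienc with the canonical Weinberg walk used by \khc, observe that for fixed $G$ the relevant data $\langle(\kappa[i],\zeta(\omega[i]))\rangle_i$ lives in a finite set, injectively encode it via sum-pooling with positional encodings, and invoke the universal approximation theorem to realise that encoding with the MLPs. The only difference is that where the paper routes through \Cref{lem_abstract:code_map} and Lemma~5 of \citet{XuHLJ19} to obtain the decomposition $g(S_C)=\phi(\sum_x f(x))$ abstractly, you build $f$ explicitly as a base-$N$ positional code — a sound, slightly more concrete rendering of the same argument (your bound $N>n|\mathcal{C}_0|$ should be $N>(n+1)|\mathcal{C}_0|$, a trivial fix).
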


Once \Cref{claim:tri-enc} established, the desired result is immediate, since, using the bijection $\rho$, and the completeness of the codes generated by the \khc algorithm, we obtain:
\begin{align*}
\widehat{\vh}_{C} &\neq \widehat{\vh}_{C'} \\
&\Leftrightarrow \\
\rho^{-1}(\code(C))&\neq\rho^{-1}(\code(C')) \\
& \Leftrightarrow\\
\code(C) &\neq \code(C')\\
& \Leftrightarrow \\
C \emph{ and } C' &\emph{ are non-isomorphic}.
\end{align*}

To prove the \Cref{claim:tri-enc}, first note that the canonical code given by Weinberg's algorithm for any component $C$ has the form:
\begin{align}
\code(C)=\tricode(C)=& \left (\kappa[1],\code((\{\omega[1]\}, \{\}))), \ldots, (\kappa[k],\code((\{\omega[k]\},\{\})) \right) \nonumber \\
=& \left (\kappa[1],\zeta(\omega[1])), \ldots, (\kappa[k],\zeta(\omega[k]) \right). \label{eq:weinberg}
\end{align}
There is a trivial bijection between the codes of the form (\ref{eq:weinberg}) and sets of the following form:
\begin{align}
\label{eq:multiset}
S_C=\left\{
    \left(\zeta(\omega[i]), \kappa[i], i \right) \mid 
                                                1 \leq i \leq |\omega|
\right\},
\end{align} 
and, as a result, for each component $C$, we can refer to the set $S_C$ that represents this component $C$ instead of $\code(C)$.
Sets of this form are of bounded size (since the number of walks are bounded in $C$) and each of their elements is from a countable set. 
%
%
By \Cref{lem_abstract:code_map} there exists an injective map $g$ between such sets and the interval $[0, 1]$. Since the size of every such set is bounded, and every tuple is from a countable set, we can apply Lemma 5 of \citet{XuHLJ19}, and decompose the function $g$ as: 
\begin{align*}
g(S_C) = \phi \left(\sum_{x \in S_C} f(x) \right),
\end{align*}
for an appropriate choice of $\phi : \sR^{d'} \rightarrow [0, 1]$ and $f(x) \in \sR^{d'}$. Based on the structure of the elements of $S_C$, we can further rewrite this as follows:
\begin{align}
\label{eq:setmap}
g(S_C) = \phi \left(\sum_{i=1}^{|\omega|} 
    f  
    \left(
        \left(\zeta(\omega[i]), \kappa[i], i \right)        
    \right) 
\right).
\end{align}
Observe that this function closely resembles \trienc (\ref{eq:trienc}). More concretely, for any $\omega$ and $i$, we have that $\zeta(\omega[i])=\vh_{\omega[i]}$, and, moreover, $\vp_{\kappa[i]}$ and $\vp_i$ are the positional encodings of $\kappa[i]$, and $i$, respectively. Hence, it is easy to see that there exists a function $\mu$, which satisfies, for every $i$:
\begin{align*}
((\zeta(\omega[i])), \kappa[i], i)  = \mu(\vh_{\omega[i]} \| \vp_{\kappa[i]} \| \vp_i). 
\end{align*}

This implies the following: 
\begin{align*}
g(S_C) &= \phi \left(\sum_{i=1}^{|\omega|} 
    f  
    \left(
       \left(\zeta(\omega[i]), \kappa[i], i \right)\right)
    \right) \\
    &= \phi \left(\sum_{i=1}^{|\omega|} 
    f  
    \left(
        \mu(\vh_{\omega[i]} \| \vp_{\kappa[i]} \| \vp_i)  \right) 
    \right)\\
    &=\phi \left(\sum_{i=1}^{|\omega|} 
    (f \circ \mu) (\vh_{\omega[i]} \| \vp_{\kappa[i]} \| \vp_i)   
    \right).
\end{align*}
Observe that this function can be parameterized by \trienc: we apply the universal approximation theorem~\cite{HornikSW89,Hornik91,Cybenko89}, and encode $(f \circ \mu)$ with an MLP (i.e., the inner MLP) and similarly encode $\phi$ with another MLP (i.e., the outer MLP). 

This establishes a bijection $\rho$ between $\mathcal{F}$ and $\mathcal{M}$: for any SPQR component $C$,  we can injectively map both the code $\code(C)$ (or, equivalently the corresponding set $S_C$) and the representation $\widehat{\vh}_C$ to the same unique value using the function $g$ as $\widehat{\vh}_C=g(S_C)$, and we have shown that there exists a parameterization of \trienc for this target function $g$.
%

\paragraph{Parameterizing \bienc.} 
In this case, we are given a biconnected component $B$ and the representations $\widehat{\vh}_C$ of each component $C$ from the SPQR tree $\gamma =\spqr(B)$. We consider the representations $\widehat{\vh}_C$ which are a result of the parameterization of \trienc, described earlier. 

\bienc uses the SPQR tree and the integers $\theta(C,C')$ corresponding to how we connect $C$ and $C' \in \chi(C)$. \bienc then computes a representation for each subtree $\gamma_{C}$ induced by a node $C$ in a bottom up fashion as:
\begin{align}
\label{eq:bienc}
  \widetilde{\vh}_{\gamma_{C}} = \mlp \left(\widehat{\vh}_{C} + \sum_{C' \in \chi(C)} \mlp \left( \widetilde{\vh}_{\gamma_{C'}} \| \vp_{\theta(C,C')} \right) \right). 
\end{align}

This encoder operates in a bottom up fashion to ensure that a subtree representation of the children of $C$ exists before it encodes the subtree $\gamma_{C}$. The representation of the canonical root node in $\gamma$ is used as the representation of the biconnected component $B$ by setting: $\widetilde{\vh}_B = \widetilde{\vh}_{\gamma_{\canonicalroot(\gamma)}}$. 

To show the result, we first note that the canonical code given by the \khc algorithm also operates in a bottom up fashion on the subtrees of $\gamma$. We have two cases:

\noindent
\emph{Case 1.} For a \emph{leaf node} $C$ in $\gamma$, the code for $\gamma_C$ is given by $\code(\gamma_C) = \tricode(C)$. This case can be seen as a special case of Case 2 (and we will treat it as such).

\noindent        
\emph{Case 2.} For a \emph{non-leaf node} $C$, we concatenate the codes of the subtrees induced by the children of $C$ in their lexicographical order, by first prepending the integer given by $\theta$ to each child code. Then, we also prepend the code of the SPQR component $C$ to this concatenation to get $\code(\gamma_C)$. More precisely, if the lexicographical ordering of $\chi(u)$, based on $\code(\gamma_{C'})$ for a given $C' \in \chi(C)$ is $x[1], \dots, x[|\chi(C)|]$, then the code for $\gamma_C$ is given by:
\begin{align}
    \label{eq:bienc_code_recursive}
    \code(\gamma_C)= \left( \tricode(C), (\theta(C, x[1]), \code(\gamma_{x[1]})), \ldots, (\theta(C, x[|x|]), \code(\gamma_{x[|x|]}))) \right)
\end{align}

We show a bijection between the multiset of codes $\mathcal{F}$ (given by the \khc algorithm), and the multiset of representations $\mathcal{M}$ (given by \bienc), where the respective multisets are defined, based on $G$ and the SPQR tree $\gamma$, as follows:
\begin{enumerate}[$\bullet$]
    \item $\mathcal{F} = \{\!\!\{ \code(\gamma_{C})  \mid C \in \gamma \}\!\!\}$: the multiset of all codes of all the induced SPQR subtrees.
    \item $\mathcal{M} = \{\!\!\{ \widetilde{\vh}_{\gamma_{C}} \mid C \in \gamma \}\!\!\}$: the multiset of the representations of all the induced SPQR subtrees.
\end{enumerate}

Analogously to the proof of \trienc, we prove the following claim:
\begin{claim}
\label{claim:bi-enc}
There exists a bijection $\rho$ between $\mathcal{F}$ and $\mathcal{M}$ such that, for any node $C$ in $\gamma$:
\[
\rho(\widetilde{\vh}_{\gamma_{C}})=\code(\gamma_{C}) \emph{ and } \rho^{-1}(\code(\gamma_{C})) = \widetilde{\vh}_{\gamma_{C}}
\]
\end{claim}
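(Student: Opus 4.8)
The plan is to mirror the \trienc argument. Exactly as the injection $g$ was produced in the proof of \Cref{claim:tri-enc} (a canonical set-representation of the code composed with \Cref{lem_abstract:code_map} and Lemma 5 of \citet{XuHLJ19}), I would fix once and for all an injective map $g$ sending the finitely many codes $\code(\gamma_C)$, $C \in \gamma$, into $[0,1]$, and then choose a single parameterization of \bienc — with the \trienc parameterization of \Cref{claim:tri-enc} already in place, so that the $\widehat{\vh}_C$ fed into \bienc are precisely the ones it outputs — so that
\[
\widetilde{\vh}_{\gamma_C} \;=\; g\big(\code(\gamma_C)\big) \qquad\text{for every node }C\text{ of }\gamma.
\]
This single identity proves the claim: it gives $\rho(\widetilde{\vh}_{\gamma_C}) = \code(\gamma_C)$, and injectivity of $g$ gives $\rho^{-1}(\code(\gamma_C)) = \widetilde{\vh}_{\gamma_C}$, so $\rho$ is a well-defined bijection between the multisets $\mathcal{M}$ and $\mathcal{F}$. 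Since $G$ is finite, $\gamma$, all codes, and all vectors that ever arise form finite sets, so each \mlp only has to be accurate at finitely many inputs and universal approximation \cite{HornikSW89,Hornik91,Cybenko89} applies throughout.

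\textbf{Induction and base case.} I would induct on $h(C)$, the maximal distance from $C$ to a leaf of $\gamma$. The bottom-up evaluation order of \bienc guarantees that whenever $\widetilde{\vh}_{\gamma_C}$ is computed, the children values $\widetilde{\vh}_{\gamma_{C'}}$, $C' \in \chi(C)$, have already been produced by the same parameterization, so the inductive hypothesis is available. For $h(C)=0$ (a leaf), \Cref{eq:bienc} reduces to $\widetilde{\vh}_{\gamma_C} = \mlp(\widehat{\vh}_C)$, while $\code(\gamma_C) = \tricode(C)$; by \Cref{claim:tri-enc}, $\widehat{\vh}_C$ determines and is determined by $\tricode(C)$, so one picks the outer \mlp to send $\widehat{\vh}_C$ to $g(\tricode(C))$ on its finite image, giving $\widetilde{\vh}_{\gamma_C} = g(\code(\gamma_C))$.

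\textbf{Inductive step.} For a non-leaf $C$, by the inductive hypothesis each child satisfies $\widetilde{\vh}_{\gamma_{C'}} = g(\code(\gamma_{C'}))$, so \bienc effectively sees the children's codes (encoded by $g$) together with the integers $\theta(C,C')$. Three observations finish the step. First, \Cref{eq:bienc_code_recursive} builds $\code(\gamma_C)$ by prepending $\tricode(C)$ to the pairs $(\theta(C,C'),\code(\gamma_{C'}))$ sorted lexicographically and concatenated with the reserved separator symbols; because those separators make the concatenation uniquely parseable, $\code(\gamma_C)$ is an injective function of the pair $\big(\tricode(C),\,\{\!\!\{(\theta(C,C'),\code(\gamma_{C'})) : C'\in\chi(C)\}\!\!\}\big)$. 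Second, since the $\theta$-values lie in a fixed bounded range and the child values range over a finite set, the multiset $\{\!\!\{\widetilde{\vh}_{\gamma_{C'}}\,\|\,\vp_{\theta(C,C')} : C'\in\chi(C)\}\!\!\}$ faithfully encodes $\{\!\!\{(\theta(C,C'),\code(\gamma_{C'}))\}\!\!\}$, and by Lemma 5 of \citet{XuHLJ19} there are maps $f,\phi$ with $\phi\big(\sum_{C'} f(\widetilde{\vh}_{\gamma_{C'}}\|\vp_{\theta(C,C')})\big)$ injectively encoding this multiset. Third, $\widehat{\vh}_C$ injectively encodes $\tricode(C)$ (again \Cref{claim:tri-enc}); since \bienc adds $\widehat{\vh}_C$ directly to the summed children contribution, one scales the inner \mlp so that $\sum_{C'} f(\cdot)$ takes values on a coarse grid from which the self-term $\widehat{\vh}_C \in [0,1]$ can be separated, so that $\widehat{\vh}_C + \sum_{C'} f(\cdot)$ injectively encodes the pair of the first observation, hence $\code(\gamma_C)$. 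Realizing $f$ by the inner \mlp and, for the outer \mlp, the composition of $\phi$ with the decoding map followed by $g$, the recursion outputs $\widetilde{\vh}_{\gamma_C} = g(\code(\gamma_C))$, completing the induction.

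\textbf{Wrap-up and the hard part.} Instantiating the identity at $C=\canonicalroot(\gamma)$ and using $\widetilde{\vh}_B = \widetilde{\vh}_{\gamma_{\canonicalroot(\gamma)}}$ and $\code(\gamma_{\canonicalroot(\gamma)}) = \bicode(B)$ then yields a bijection between the biconnected-component representations and their \khc codes; completeness of those codes gives part (i) of \Cref{lem:bi-tri}, with part (ii) being \Cref{claim:tri-enc}. The step I expect to be the real obstacle is the third observation above: unlike a GIN-style $(1+\epsilon)$ multiplier, \bienc injects $\widehat{\vh}_C$ with no transformation, so one must lay out the numerical ranges carefully so that the self-term and the aggregated children term can be disentangled by the outer \mlp, and one must check that one fixed parameterization does this simultaneously at every depth of $\gamma$ — both routine given finiteness of $G$, but needing care to state precisely.
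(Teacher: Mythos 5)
Your proof is correct and follows essentially the same route as the paper's: represent each code $\code(\gamma_C)$ by the canonical set $S_C$ of \Cref{eq:multiset2}, inject via \Cref{lem_abstract:code_map} into $[0,1]$, decompose the resulting map into a sum form via Lemma~5 of \citet{XuHLJ19}, realize the pieces with the two MLPs of \bienc, and induct bottom-up on the SPQR tree to propagate the bijection through the recursion. The one place you genuinely deviate, and which you rightly flag as the delicate step, is how the untransformed self-term $\widehat{\vh}_C$ is reconciled with the form $\phi\big(\sum_{x\in S_C} f(x)\big)$: you leave \trienc untouched and instead scale the inner MLP so that the children's sum lands on a coarse grid from which $\widehat{\vh}_C\in[0,1]$ can be read off, whereas the paper reparameterizes \trienc's outer MLP so that $\widehat{\vh}_C$ \emph{is} $f(\{\tricode(C)\})$, making $\widehat{\vh}_C + \sum_{C'} (f'\circ\mu)(\cdot)$ equal exactly to the Lemma-5 sum over $S_C$ (see the paper's footnote at \Cref{eq:bienc}). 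Both resolutions are valid over a finite graph; the paper's keeps the whole construction inside the Lemma-5 framework, while yours is more hands-on but avoids modifying the \trienc parameterization fixed in \Cref{claim:tri-enc}.
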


Given \Cref{claim:bi-enc}, the result follows, since, using the bijection $\rho$, and the completeness of the codes generated by the \khc algorithm, we obtain:
\begin{align*}
\widetilde{\vh}_{B} &\neq \widetilde{\vh}_{B'} \\
&\Leftrightarrow \\
\widetilde{\vh}_{\gamma_\canonicalroot(B)} &\neq  \widetilde{\vh}_{\gamma_\canonicalroot(B')} \\
&\Leftrightarrow \\
\rho^{-1}(\code(\canonicalroot(B)))&\neq\rho^{-1}(\code(\canonicalroot(B'))) \\
& \Leftrightarrow\\
\code(\canonicalroot(B)) &\neq \code(\canonicalroot(B'))\\
& \Leftrightarrow \\
B \emph{ and } B' &\emph{ are non-isomorphic}.
\end{align*}

%
To prove \Cref{claim:bi-enc}, let us first consider how $\code(\gamma_C)$ is generated. For any node $C$ in $\gamma$, there is a bijection between the codes of the form given in \Cref{eq:bienc_code_recursive} and sets of the following form:
\begin{align}
\label{eq:multiset2}
S_C = \{\!\!\{ 
\left\{ \tricode(C) \right\} 
\}\!\!\} 
\cup
\left\{\!\!\{
(\theta(C, C'), \code(\gamma_{C'})) \mid C' \in \chi(C) 
\}\!\!\right\}
\end{align} 
Observe that the sets of this form are of bounded size (since the number of children is bounded) and each of their elements is from a countable set (given the size of the graph, we can also bound the number of different codes that can be generated). 
By \Cref{lem_abstract:code_map} there exists an injective map $g$ from such sets to the interval $[0, 1]$. Since the size of every such set is bounded, and every tuple is from a countable set, we can apply Lemma 5 of \citet{XuHLJ19}, and decompose the function $g$ as: 
\begin{align*}
g(S_C) = \phi \left(\sum_{x \in S} f(x) \right),
\end{align*}
for an appropriate choice of $\phi : \sR^{d'} \rightarrow [0, 1]$ and $f(x) \in \sR^{d'}$. Based on the structure of the elements of $S_C$, we can further rewrite this as follows:
\begin{align}
\label{eq:bi_setmap}
g(S_C) = \phi \left(f(\tricode(C)) + \sum_{C' \in \chi(C)} 
     f  \left(
        \left(\theta(C, C'),\code(\gamma_{C'}) \right)        
    \right) 
\right).
\end{align}

Observe the connection between this function and \bienc (\ref{eq:bienc}): for every $C' \in \chi(C)$, we have $\widetilde{\vh}_{\gamma_{C'}}$ instead of $\code(\gamma_{C})$, and, moreover, $\vp_{\theta(C, C')}$ is a positional encoding of $\theta(C, C')$. 
Then, there exists a function $\mu$ such that:
\begin{align*}
(\theta(C, C'), \code(\gamma_{C'})) = \mu(\vp_{\theta(C, C')} \| \widetilde{\vh}_{\gamma_{C'}}),
\end{align*}
provided that the following condition is met: 
\begin{align}
\label{eq:condition}
\widetilde{\vh}_{\gamma_{C'}} = \code(\gamma_{C'}) \text{ for any } C' \in \chi(C).    
\end{align}

Importantly, the choice for $\mu$ can be the same for all nodes $C$. Hence, assuming the condition specified in \Cref{eq:condition} is met, the function $g$ can be further decomposed using some function $f'(x) \in \sR^{d'}$ which satisfies: 
\begin{align*}
g(S_C) &= \phi \left(f(\{\tricode(C)\}) +  \sum_{C' \in \chi(C)} 
     f  \left(
        \left(\theta(C, C'),\code(\gamma_{C'}) \right)        
    \right) 
\right) \\
    &= \phi \left( \widehat{\vh}_C + \sum_{C' \in \chi(C)} 
     f' \left( \mu \left(
        \vp_{\theta(C, C')} \| \widetilde{\vh}_{\gamma_{C'}}        
    \right) \right)
\right)\\
    &= \phi \left(\widehat{\vh}_C + \sum_{C' \in \chi(C)} 
     (f' \circ \mu) \left(
        \vp_{\theta(C, C')} \| \widetilde{\vh}_{\gamma_{C'}}        
    \right)
\right).
\end{align*}

Observe that this function can be parameterized by \bienc\footnote{Note that $f(\tricode(C))$ can be omitted, because \trienc has an outer MLP, which can incorporate $f$.} (\ref{eq:bienc}): we apply the universal approximation theorem~\cite{HornikSW89,Hornik91,Cybenko89}, and encode $(f' \circ \mu)$ with an MLP (i.e., the inner MLP) and similarly encode $\phi$ with another MLP (i.e., the outer MLP). 

To conclude the proof of \Cref{claim:bi-enc} (and thereby the proof of \Cref{lem:bi-tri}), we need to show the existence of bijection $\rho$ between $\mathcal{F}$ and $\mathcal{M}$ such that, for any node $C$ in $\gamma$:
\[
\rho(\widetilde{\vh}_{\gamma_{C}})=\code(\gamma_{C}) \emph{ and } \rho^{-1}(\code(\gamma_{C})) = \widetilde{\vh}_{\gamma_{C}}.
\]
This can be shown by a straight-forward induction on the structure of the tree $\gamma$. For the base case, it suffices to observe that $C$ is a leaf node, which implies 
 $\widetilde{\vh}_{\gamma_C} = \phi(\widehat{\vh}_C)$ and $\code(\gamma_C) = \{\tricode(C)\}$. The existence of a bijection is then warranted by \Cref{claim:tri-enc}.
For the inductive case, assume that there is a bijection between the induced representations of the children of $C$ and their codes to ensure that the condition given in \Cref{eq:condition} is met (which holds since the algorithm operates in a bottom up manner). Using the injectivity of $g$, and the fact that all subtree representations (of children) already admit a bijection, we can easily extend this to a bijection on all nodes $C$ of $\gamma$.

We have provided a parameterization of \trienc and \bienc and proven that they can compute representations which bijectively map to the codes of the \khc algorithm for the respective components, effectively aligning \khc algorithm with our encoders for these components. Given the completeness of the respective procedures in \khc, we conclude that the encoders are also complete in terms of distinguishing the respective components.
\end{proof}

Having showed that a single layer parameterization of \bienc and \trienc is sufficient for distinguishing the biconnected and triconnected components, we proceed with the main lemma.  

\noindent\textbf{Lemma 6.3.} \emph{For a planar graph $G = (V, E, \zeta)$ of order $n$ and its associated Block-Cut tree $\delta = \blockcut(G)$, there exists a $L = \lceil \log_2(n) \rceil + 1$ layer parameterization of \baseplane that computes a complete graph invariant for each subtree $\delta_u$ induced by each cut node $u$.}

\begin{proof}
\cutenc recursively computes the representation for induced subtrees $\delta_u$ from cut nodes $u$, where $\delta = \blockcut(G)$. Recall that in Block-Cut trees, the children of a cut node always represent a biconnected component, and the children of a biconnected component always represent a cut node. Therefore, it is natural to give the update formula for a cut node $u$ in terms of the biconnected component $B$ represented by $u$'s children $\chi(u)$ and $B$'s children $\chi(B)$. 

\begin{align}
    \label{eq:cutenc_app}
    {\widebar{\vh}}_{\delta_u}^{(\ell)} =  \mlp \left(
        \vh_u^{(\ell-1)} +
        \sum_{B \in \chi(u)} \mlp \left(
            \widetilde{\vh}_B^{(\ell)} +
            \sum_{v \in \chi(B)} {\widebar{\vh}}_{\delta_v}^{(\ell)} 
        \right)
    \right).
\end{align}

To show the result, we first note that the canonical code given by the \khc algorithm also operates in a bottom up fashion on the subtrees of $\delta$. We have three cases:

\paragraph{Case 1.} For a \emph{leaf} $B$ in $\delta$, the code for $\delta_B$ is given by $\code(\delta_B) = \bicode(B)$. This is because the leafs of $\delta$ are all biconnected components. 

\paragraph{Case 2.} For a \emph{non-leaf} biconnected component $B$ in $\delta$, we perform overrides for the codes associated with each child cut node, and then use $\bienc$. More precisely, we override the associated $\code(\{\{u\}, \{\}\}) := \code(\delta_u)$ for all $u \in \chi(B)$, and then we compute $\code(\delta_B) = \bicode(B)$. 

\paragraph{Case 3.} For a \emph{non-leaf} cut node $u$ in $\delta$, we encode in a similar way to \bienc: we get the set of codes induced by the children of $u$ in their lexicographical order. More precisely, if the lexicographical ordering of $\chi(u)$, based on $\code(\delta_B)$ for a given $B \in \chi(u)$ is $x[1], \dots, x[|\chi(u)|]$, then the code for $\delta_u$ is given by:
\begin{align}
    \label{eq:bienc_code_recursive_nonleaf}
    \code(\delta_u)= \left(\code(\delta_{x[1]})), \ldots, \code(\delta_{x[|x|]}) \right)
\end{align}

Instead of modelling the overrides (as in Case 2), \baseplane learns the cut node representations. We first prove this result by giving a parameterization of \baseplane which uses linearly many layers in $n$ and then show how this construction can be improved to use logarithmic number of layers. Specifically, we will first show that \baseplane can be parameterized to satisfy the following properties: 

\begin{enumerate}
    \item For every cut node $u$, we reserve a dimension in $\vh_u^{L}$ that stores the number of cut nodes in $\delta_u$. This is done in the \update part of the corresponding layer.
    \item There is a bijection $\lambda$ between the representations of the induced subtrees and subtree codes, such that $\lambda(\widebar{\vh}_{\delta_{u}}^{(L)})=\code(\delta_{u}) \emph{ and } \lambda^{-1}(\code(\delta_{u}^{(L)})) = \widebar{\vh}_{\delta_{u}}^{(L)}$, for cut nodes $u$ that have strictly less than  $L$ cut nodes in $\delta_u$.  
    \item There is a bijection $\rho$ between the cut node representations and subtree codes, such that $\rho(\vh_{u}^{(L)})=\code(\delta_{u}) \emph{ and } \rho^{-1}(\code(\delta_{u}^{(L)})) = \vh_{u}^{(L)}$, for cut nodes $u$ that have strictly less than  $L$ cut nodes in $\delta_u$.  
\end{enumerate}
Observe that the property (2) directly gives us a complete graph invariant for each subtree $\delta_u$ induced by each cut node $u$, since the codes for every induced subtree are complete, and through the bijection, we obtain complete representations. The remaining properties are useful for later in order to obtain a more efficient construction.

 The \update function in every layer is crucial for our constructions, and we recall its definition:
\begin{align*}
    \vh_u^{(\ell)}  = f^{(\ell)} \Big( 
    & g_1^{(\ell)}\big(\vh_u^{(\ell-1)} + \sum_{v \in N_u} \vh_v^{(\ell-1)} \big)
    ~\|~g_2^{(\ell)} \big(\sum_{v \in V} \vh_v^{(\ell-1)}\big) ~\| \\
    & g_3^{(\ell)}\big(\vh_u^{(\ell-1)} + \sum_{C \in \sigma_u^G} \widehat{\vh}_C^{(\ell)}\big)~
     ~\|~g_4^{(\ell)}\big(\vh_u^{(\ell-1)} + \sum_{B \in \pi_u^G} \widetilde{\vh}_B^{(\ell)}\big) 
    ~\|~\widebar{\vh}_{\delta_u}^{(\ell)} 
    \Big),
\end{align*}
We prove that there exists a parameterization of \baseplane which satisfies the properties (1)-(3) by induction on the number of layers $L$.

\textbf{Base case.} $L = 1$. In this case, there are no cut nodes satisfying the constraints, and the model trivially satisfies the properties (2)--(3). To satisfy (1), we can set the inner MLP in \cutenc as the identity function, and the outer MLP as a function which adds $1$ to the first dimension of the input embedding. This ensures that the representations $\widebar{\vh}_{\delta_u}^{(1)}$ of cut nodes $u$ have their first components equal to the number of cut nodes in $\delta_u$. We can encode the property (1) in $\vh_u^{(\ell)}$ using the representation $\widebar{\vh}_{\delta_u}^{(1)}$, since the latter is a readout component in \update.

\textbf{Inductive step.} $L>1$. By induction hypothesis, there is an $(L-1)$-layer parameterization of \baseplane which satisfies the properties (1)--(3). We can define the $L$-th layer so that our $L$ layer parameterization of \baseplane satisfies all the properties:

\emph{Property (1):} For every cut node $u$, the function \update has a readout from $\vh_u^{(L-1)}$ and $\widebar{\vh}_{\delta_u}^{(L)}$, which allows us to copy the first dimension of $\vh_u^{(L-1)}$ into the first dimension of $\vh_u^{(L)}$ using a linear transformation, which immediately gives us the property. 

\emph{Property (2):} For this property, let us first consider the easier direction. Given the  code of $\delta_u$, we want to find the \cutenc representation for the induced subtree of $u$.  From the subtree code, we can reconstruct the induced subtree $\delta_u$, and then run a $L$-layer \baseplane on reconstructed Block-Cut Tree to find the \cutenc representation. As for the other direction, we want to find the subtree code of given the representation of the induced subgraph. 
The \cutenc encodes a multiset $\{\!\!\{ (\widetilde{\vh}_B, \{\!\!\{ \vh_v | v \in \chi(B) \}\!\!\} ) ~|~ B \in \chi(u) \}\!\!\}$. By induction hypothesis, we know that all grandchildren $v \in \chi^2(u)$ already have properties (1)--(3) satisfied for them with the first $(L-1)$ layers. Hence, using the parameterization of \trienc and \bienc given in \Cref{lem:bi-tri}, as part of the $L$-th \baseplane layer, we can get a bijection between \bicode and biconnected component representation. This way we can obtain $\bicode(B)$ for all the children biconnected components $B \in \chi(u)$, with all the necessary overriding. Having all necessarily overrides is crucial, because to get the \khc code for the cut node $u$, we need to concatenate the biconnected codes from \ref{eq:bienc_code_recursive} that already have the required overrides. Hence, we make the parameterization of \cutenc encode multisets of representations for biconnected components $B \in \chi(u)$, and by similar arguments as in the proof of \Cref{lem:bi-tri}, this can be done using the MLPs in \cutenc.

\emph{Property (3):} Using the bijection from (2) as a bridge, we can easily show the property (3). In the update formula, we appended $\widebar{\vh}_{\delta_u}^{(L)}$ using the MLP. If the MLP is bijective with the dimension taken by $\widebar{\vh}_{\delta_u}^{(L)}$, we get a bijection between the node representation and the subtree representation. By transitivity, we get a bijection between node representations and subtree codes.


This concludes our construction using $L=O(n)$ \baseplane layers.

\paragraph{Efficient construction.}  We now show that the presented construction can be made more efficient, using only $\lceil \log_2(n) \rceil + 1$ layers. This is achieved by treating the cut nodes $u$  differently based on the number of cut nodes they include in their induced subtrees $\delta_u$. In this construction, the property (1) remains the same, but the properties (2)--(3) are modified:
\begin{enumerate}
    \item For every cut node $u$, we reserve a component in $\vh_u^{(L)}$ that stored to the number of cut nodes in $\delta_u$. This is done in the \update part of the corresponding layer.
    \item There is a bijection $\lambda$ between the representations of the induced subtrees and subtree codes, such that $\lambda(\widebar{\vh}_{\delta_{u}}^{(L)})=\code(\delta_{u}) \emph{ and } \lambda^{-1}(\code(\delta_{u}^{(L)})) = \widebar{\vh}_{\delta_{u}}^{(L)}$, for cut nodes $u$ that have strictly less than  $2^{(L-1)}$ cut nodes in $\delta_u$.    
    \item There is a bijection $\rho$ between the cut node representations and subtree codes, such that $\rho(\vh_{u}^{(L)})=\code(\delta_{u}) \emph{ and } \rho^{-1}(\code(\delta_{u}^{(L)})) = \vh_{u}^{(L)}$, for cut nodes $u$ that have strictly less than  $2^{(L-1)}$ cut nodes in $\delta_u$.   
\end{enumerate}

These new modified properties allow us to reduce the depth of the induction to a logarithmic number of steps, by having a more carefully designed parameterization of \cutenc. The logarithmic depth comes immediately from properties (2) and (3), while the core observations for building the \cutenc parameterizations are motivated by standard literature on efficient tree data structures, and in particular by the Heavy Light Decomposition (HLD) \cite{SleatorTarjanLCT81}.
In HLD, we build ``chains'' through the tree that always go to the child that has the most nodes in its subtree. This gives us the property that, the path between each two nodes visits at most a logarithmic number of different chains, and we will use this concept in our parametrization. More precisely, we can construct a single \cutenc layer, that will make the above properties hold for whole chains, and not only individual nodes as in the previous construction.  

We will once again use induction to show that such parametrizations exist.

\textbf{Base case.} $L = 1$. The properties are equivalent to the ones in the less efficient construction: property (1) is the same, and $2^{(L-1)} = 2^0 = L$, so we will satisfy them with the same parameterization.

\textbf{Inductive step.} $L>1$. By induction hypothesis, there is an $(L-1)$-layer parameterization of \baseplane which satisfies the properties (1)--(3). We can define the $L$-th layer so that our $L$ layer parameterization of \baseplane satisfies all the properties:

\emph{Property (1):} This is satisfied in the same way as in the less efficient construction - we allocate one of the dimensions for the required count, and propagate it from the previous layer.

\emph{Property (2):} Let us consider an arbitrary cut node $u$ that has strictly less than $2^{(L-1)}$ cut nodes in its induced subtree $\delta_u$, and still does \emph{not} satisfy property (2). We will call such nodes ``improper'' with respect to the current induction step, and show that there is a single layer \cutenc parameterization that makes all improper nodes satisfy these two properties, or become ``proper''.

First, observe that an arbitrary cut node $u=v_0$ has at most one improper grandchild $v_1 \in \chi^2(u)$, because if there were more improper grandchildren then it would have had at least $2^{(L-1)}$ cut nodes in it. Repeating a similar argument, we know that there is at most one improper $v_2 \in \chi^2(v_1)$. Continuing this until there are no improper grandchildren, we form a sequence of improper cut nodes $u = v_0, v_1, \dots, v_k$, such that $v_{i+1}$ is a grandchild of $v_i$ for all $0 \leq i < k$. The chain $u = v_0, v_1, \dots, v_k$ is precisely a chain in HLD, and we will show that a single \cutenc layer can make an ``improper chain'' satisfy the properties. We prove this by applying inner induction on the chain:


\textbf{Induction hypothesis (for grandchildren):} By the induction hypothesis, we have an established $(L-1)$-layer parameterization of \baseplane that satisfies properties (1)--(3) for all but a single grandchild of \emph{each} node in the chain $v_0, ..., v_k$. This assumption is critical as it provides the ground truth that our subsequent steps will build upon.

\textbf{Inner induction:} Having established the hypothesis for our node's children, we initiate an inner induction, starting from the farthest improper node in the chain, $v_k$, and progressively moving towards the root of the chain, $u = v_0$. This process is akin to traversing back up the chain, ensuring each node, starting from $v_k$, satisfies property (2) as we ascend.

For each node $v_i$ in our reverse traversal, we apply the special property: there exists at most one improper grandchild (potentially $v_{i+1}$) within its local structure. Our \cutenc architecture then identifies this improper grandchild, which, crucially, is ``proper'' in its representation due to our inner induction process. This \emph{identification} is facilitated by the inner MLP given in \Cref{eq:cutenc_app}, and property (1), as identifying is equivalent to selecting the grandchild with the largest number of cut nodes in the allocated dimension.

Having identified and isolated our improper grandchild $v_{i+1}$ (now proper in $\widebar{\vh}_{\delta_{v_{i+1}}}^{(L)}$), \cutenc then integrates this node with the other children and grandchildren of $v_i$. It is important to note that all these other nodes have representations bijective to their respective subtree \emph{codes} because of the induction hypothesis. This integration is not a mere collection; instead, the outer MLP in \Cref{eq:cutenc_app} models the ``code operation'' that \bicode would have done, which is possible due to the mentioned bijections to codes. This concludes the inner induction.

Our choice for $u$ was arbitrary and the required \cutenc parameterization is independent from the exact node representaton - it simply models a translation back to codes, and then injecting back to embeddings. Hence, all chains of improper nodes will satisfy property (2) after a single \cutenc layer. 

\emph{Property (3):} We once again follow the same approach as in the less efficient construction - we use property (2) as a bridge, as we know we have already satisfied it. Property (3) is needed to ensure that all nodes that once became ``proper'' will always stay so, by propagating the bijection to codes. 

This concludes our construction using $\lceil \log_2(n) \rceil + 1$ layers.
\end{proof}

\noindent\textbf{Theorem 6.1.} \emph{For any planar graphs $G_1 = (V_1, E_1, \zeta_1)$ and $G_2 = (V_2, E_2, \zeta_2)$, there exists a parameterization of \baseplane with at most $L = \lceil\log_2(\max\{|V_1|,|V_2|\})\rceil + 1$ layers, which computes a complete graph invariant, that is, the final graph-level embeddings 
satisfy $\vz_{G_1}^{(L)} \neq \vz_{G_2}^{(L)}$  if and only if $G_1$ and $G_2$ are not isomorphic.} 

\begin{proof}
The ``only if'' direction is immediate because \baseplane is an invariant model for planar graphs. 
To prove the ``if'' direction, we do a case analysis on the root of the two Block-Cut Trees. For each case, we provide a parameterization of \baseplane such that $\vz_{G_1}^{(L)} \neq \vz_{G_2}^{(L)}$  for any two non-isomorphic graphs $G_1$ and $G_2$. A complete \baseplane model can be obtained by appropriately unifying the respective parameterizations.

\paragraph*{Case 1.} $\canonicalroot(\delta_1)$ and $\canonicalroot(\delta_2)$ represents two cut nodes.

Consider a parameterization of the final \baseplane update formula, where only cut node representation is used, and a simplified readout that only aggregates from the last layer. We can rewrite the readout for a graph in terms of the cut node representation from the last \baseplane layer:

\begin{align*}
     \vz_G^{(L)} = \mlp \left(
          \sum_{u \in V^G} \mlp(
               \widebar{\vh}^{(L)}_{\delta_u}
          )
     \right).
\end{align*}

Let $\cut(G) = \{\!\!\{ \widebar{\vh}_{\delta_u}^{(L)} |~ u \in V^G \}\!\!\}$. Intuitively, $\cut(G)$ is a multiset of cut node representations from the last \baseplane layer. We assume $|V_{\delta_1}| \leq |V_{\delta_2}|$ without loss of generality. Consider the root node $\canonicalroot(\delta_2)$ of the Block-Cut Tree $\delta_2$. By Lemma 6.3, we have $\vh_{\canonicalroot(\delta_2)}$ as a complete graph invariant with $L$ layers. Since $\delta_2$ cannot appear as a subtree of $\delta_1$, $\vh_{\canonicalroot(\delta_2)} \notin \cut(G_1)$. Hence, $\cut(G_1) \neq \cut(G_2)$. Since this model can define an injective mapping on the multiset $\cut(G)$ using similar arguments as before, we get that $\vz_{G_1}^{(L)} \neq \vz_{G_2}^{(L)}$. 

\paragraph*{Case 2.} $\canonicalroot(\delta_1)$ and $\canonicalroot(\delta_2)$ represents two biconnected components.

We use a similar strategy to prove Case 2. First, we consider a simplified \baseplane model, where the update formula considers biconnected components only and the final readout aggregates from the last \baseplane layer. We similarly give the final graph readout in terms of the biconnected component representation from the last \baseplane layer.

\begin{align*}
     \vz_G = \mlp \left(
          \sum_{u \in V^G} \mlp(
               \big(\vh_u^{(L-1)} + \sum_{B \in \pi_u^G} \widetilde{\vh}_B^{(L)}\big) 
          )
     \right).
\end{align*}

Let $\bic(G) = \set{(\vh_u^{(L-1)}, \{\!\!\{ \widetilde{\vh}_B^{(L)} ~|~ B \in \pi_u^G \}\!\!\}) |~ u \in V^G }$. In Lemma 6.3, we prove that $\widetilde{\vh}_B^{(L)}$ is also a complete invariant for the subtree rooted at $B$ in the Block-Cut Tree.  First, we show $\bic(G_1) \neq \bic(G_2)$. As before, we assume $|V_{\delta_1}| \leq |V_{\delta_2}|$ without loss of generality. Consider how the biconnected component representation $\vh_{\canonicalroot(\delta_2)}$ appears in the two multisets of pairs. For $G_2$, there exist at least one node $u$ and pair: 
\[
(\vh_u^{(L-1)}, \{\!\!\{ \widetilde{\vh}_B^{(L)} ~|~ B \in \pi_u^G \}\!\!\}) \in \bic(G_2),
\]
such that $\vh_{\canonicalroot(\delta_2)} \in \{\!\!\{ \widetilde{\vh}_B^{(L)} ~|~ B \in \pi_u^G \}\!\!\}$.  However, because $\vh_{\canonicalroot(\delta_2)}$ is a complete invariant for $\delta_2$ and $\delta_2$ cannot appear as a subtree in $\delta_1$, no such pair exists in $\bic(G_1)$. Given $\bic(G_1) \neq \bic(G_2)$, we can parameterize the MLPs to define an injective mapping to get that $\vz_{G_1}^{(L)} \neq \vz_{G_2}^{(L)}$.

\paragraph*{Case 3.} $\canonicalroot(\delta_1)$ represents a cut node and $\canonicalroot(\delta_2)$ represents a biconnected component.

We can distinguish $G_1$ and $G_2$ using a simple property of $\canonicalroot(\delta_1)$. Recall that $\pi_u^G$ represents the set of biconnected components that contains $u$, and $\chi^{\delta}(C)$ represents the C's children in the Block-Cut Tree $\delta$. For $\canonicalroot(\delta_1)$, we have $|\pi_u^{G_1}|=|\chi^{\delta_1}(u)|$. However, for any other cut node $u$, including the non-root cut node in $\delta_1$ and all cut nodes in $\delta_2$, we have $|\pi_u^{G}|=|\chi^{\delta}(u)|+1$, because there must be a parent node $v$ of $u$ such that $v \in \pi_u^{G_1}$ but $v \notin \chi^{\delta}(u)$.

Therefore, we consider a parameterization of a one-layer \baseplane model that exploits this property. In \bienc, we have a constant vector $[1,0]^\top$ for all biconnected components. In \cutenc, we learn $[0, |\chi_u|]^\top$ for all cut nodes $u$. In the update formula, we have $[|\pi_u^{G}| - |\chi^{\delta}(u)| - 1, 0]^\top$. All of the above specifications can be achieved using linear maps. For the final readout, we simply sum all node representations with no extra transformation.

Then, for $\canonicalroot(\delta_1)$, we have $\vh_{\canonicalroot(\delta_u)} = [-1,0]^\top$. For any other cut node $u$, we have $\vh_{u} = [0, 0]^\top$. For all non-cut nodes $u$, we also have $\vh_u = [0, 0]^\top$ because $|\pi_u^G|=1$ and $\widebar{\vh}_{\delta_u} = [0,0]^\top$. Summing all the node representations yields $\vz_{G_1} = [-1,0]^\top$ but $\vz_{G_2}=[0,0]^\top$. Hence, we obtain $\vz_{G_1}^{(L)} \neq \vz_{G_2}^{(L)}$, as required.
\end{proof}

\section{Runtime analysis of \baseplane}
\label{app:runtime}

\subsection{Asymptotic analysis}

In this section, we study the runtime complexity of the \baseplane model. 

\noindent\textbf{Computing components of a planar graph.} Given an input graph $G=(V,E, \zeta)$, \baseplane first computes the SPQR components/SPQR tree, and identifies cut nodes. For this step, we follow a simple $O(|V|^2)$ procedure, analogously to the computation in the \khc algorithm. 
Note that this computation only needs to run once, as a pre-processing step. Therefore, this pre-computation does not ultimately affect runtime for model predictions. 

\noindent\textbf{Size and number of computed components.} 
\begin{enumerate}
    \item \emph{Cut nodes:} The number of cut nodes in $G$ is at most $|V|$, and this corresponds to the worst-case when $G$ is a tree. 
    \item \emph{Biconnected Components:} The number of biconnected components $|\pi^G|$ is at most $|V|-1$, also obtained when $G$ is a tree. This setting also yields a worst-case bound of $2|V|-2$ on the total number of nodes across all individual biconnected components.

    \item \emph{SQPR components:} As proved by \citet{Gutwenger01}, given a biconnected component $B$, the number of corresponding SPQR components, as well as their size, is bounded by the number of nodes in $B$. The input graph may have multiple biconnected components, whose total size is bounded by $2|V|-2$ as described earlier. Thus, we can apply the earlier result from \citet{Gutwenger01} to obtain an analogous bound of $2|V|-2$ on the total number of SPQR components.

    \item \emph{SPQR trees}: As each SPQR component corresponds to exactly one node in a SPQR tree. The total size of all SPQR trees is upper-bounded by $2|V|-2$. 
\end{enumerate}

\noindent\textbf{Complexity of \trienc.}  \trienc computes a representation for each SPQR component edge. This number of edges, which we denote by $e_C$, is in fact linear in $V$: Indeed, following Euler's theorem for planar graphs ($|E| \leq 3|V| - 6$), the number of edges per SPQR component is linear in its size. Moreover, since the total number of nodes across all SPQR components is upper-bounded by $2|V|-2$, the total number of edges is in $O(V)$. Therefore, as each edge representation can be computed in a constant time with an MLP, \trienc computes all edge representations across all SPQR components in $O(e_C\cdot d)= O(|V|d^2)$ time, where $d$ denotes the embedding dimension of the model. 

Using the earlier edge representations, \trienc performs an aggregation into a triconnected component representation by summing the relevant edge representations, and this is done in $O(e_C d)$. Then, the sum outputs across all SPQR components are transformed using an MLP, in $O(|\sigma^G| d^2)$. Therefore, the overall complexity of \trienc is $O((e_c + |\sigma^G|)d^2) = O(|V| d^2)$.

\noindent\textbf{Complexity of \bienc.} \bienc recursively encodes nodes in the SPQR tree. Each SPQR node is aggregated once by its parent and an MLP is applied to each node representation once. The total complexity of this call is therefore $O(|\sigma^G|d^2) = O(|V|d^2)$. 

\noindent\textbf{Complexity of \cutenc.} A similar complexity of $O(|V|d^2)$ applies for \cutenc, as \cutenc follows a similar computational pipeline.

\noindent\textbf{Complexity of node update.} As in standard MLPs, message aggregation runs in $O(|E|d)$, and the combine function runs in $O(|V|d^2)$. Global readouts can be computed in $O(|V|d)$. Aggregating from bi-connected components also involves at most $O(|V|d)$ messages, as the number of messages corresponds to the total number of bi-connected component nodes, which itself does not exceed $2|V|-2$. The same argument applies to messages from SPQR components: nodes within these components will message their original graph analogs, leading to the same bound. Finally, the cut node messages are linear, i.e., $O(|V|)$, as each node receives exactly one message (no aggregation, no transformation). Overall, this leads to the step computation having a complexity of $O(|V|d^2)$. 

\noindent\textbf{Overall complexity.}
Each \baseplane layer runs in $O(|V|d^2)$ time, as this is the asymptotic bound of each of its individual steps. The $d^2$  term primarily stems from MLP computations, and is not a main hurdle to scalability, as the used embedding dimensionality in our experiments is usually small.

\noindent\textbf{Parallelization.}
Parallelization can be used to speed up the pre-processing of large planar graphs. In particular, parallelization can reduce the runtime of KHC pre-processing, and more precisely the computation of the canonical walk in Weinberg's algorithm, which itself is the individual step with the highest complexity ($O(|V|^2)$) in our approach. To this end, one can independently run Weinberg's algorithm across several triconnected components in parallel. Moreover, we can also parallelize the algorithm's operation within a single triconnected component: as the quadratic overhead comes from computing the lexicographically smallest \tricode, where the computation depends on the choice of the first walk edge, we can concurrently evaluate each first edge option and subsequently aggregate over all outputs to more efficiently find the smallest code. 

\subsection{Empirical runtime evaluation}

\textbf{Experimental setup.} To validate the runtime efficiency of \baseplane, we conduct a runtime experiment on real-world planar graphs based on the geographic faces of Alaska from the dataset TIGER, provided by the U.S. Census Bureau. The original dataset is TIGER-Alaska-93K and has 93366 nodes. We also extract the smaller datasets TIGER-Alaska-2K and TIGER-Alaska-10K, which are subsets of the original dataset with 2000 and 10000 nodes, respectively. We compare the wallclock time of \baseplane and PPGN which is as expressive as 2-WL. 

\begin{table}[ht]
\centering
\caption{Runtime experiments on planar graphs TIGER-Alaska-2K, TIGER-Alaska-10K, and TIGER-Alaska-93K. We report the pre-processing and training time (per epoch) for all models.}
\label{tab:plane-run-time}
\begin{tabular}{cccccccc}
\toprule
& & \multicolumn{2}{c}{\baseplane} & \multicolumn{2}{c}{PPGN} & \multicolumn{2}{c}{ESAN} \\
\cmidrule(r){3-4} 
\cmidrule(r){5-6}
\cmidrule(r){7-8}
Dataset & \#Nodes & Pre.\ & Train & Pre.\ & Train & Pre.\ & Train \\
\midrule
TIGER-Alaska-2K & 2000 & 9.8 sec & 0.1 sec & 3.4 sec & 5.9 sec & 4.6 sec & 87.73 sec\\
TIGER-Alaska-10K & 10000 & 50 sec & 0.33 sec & OOM & OOM  & OOM & OOM\\
TIGER-Alaska-93K & 93366 & 3.7 h's & 2.2 sec & OOM & OOM & OOM & OOM \\
\bottomrule
\end{tabular}
\end{table}

\textbf{Results.} The runtimes reported in \Cref{tab:plane-run-time} confirm our expectation: \baseplane is the only model which can scale up to all datasets. Indeed, both ESAN and PPGN fail on TIGER-Alaska-10K and TIGER-Alaska-93K. Somewhat surprisingly, ESAN training time is slower than PPGN training time on TIGER-Alaska-2K which is likely due to the fact that the maximal degree of TIGER-Alaska-2K is 620, which the ESAN algorithm depends on. \baseplane is very fast for training with the main bottleneck being \emph{one-off} pre-processing. This highlights that \baseplane is a highly efficient option for inference on large-scale graphs compared to subgraph or higher-order alternatives.

\section{An ablation study on ZINC}
\label{app:zinc-ablation}

\begin{table}[t]
    \centering
    \caption{\baseplane ablation study on ZINC. Each component is essential to achieve the reported performance. The use of bi- and triconnected components yields substantial improvements. Aggregating over direct neighbours in the update equation remains important.}
    \label{tab:plane-ablation-zinc}
    
    \begin{tabular}{lc}
    \toprule
    Models & ZINC MAE \\
    \midrule
    \baseplane (original) & $\bm{0.076} \pm$ 0.003 \\
    \baseplane (no readout)   & $\color{gray}{\mathbf{0.079}}\pm$ 0.002 \\
    \baseplane (no \cutenc)   & $\color{gray}{\mathbf{0.079}}\pm$ 0.003\\
    \baseplane (no neighbours)  & 0.099$\pm$ 0.004 \\
    \baseplane (only \bienc)  & 0.097$\pm$ 0.002\\
    \baseplane (only \trienc) & 0.092$\pm$ 0.003\\
    \bottomrule
    \end{tabular}
\end{table}

We additionally conduct extensive ablation studies on each component in our model update equation using the ZINC 12k dataset. We report the final results in \Cref{tab:plane-ablation-zinc}.
The setup and the main findings can be summarized as follows:
\begin{enumerate}[$\bullet$]
    \item \baseplane (no readout): We removed the global readout term from the update formula and MAE worsened by $0.003$.
    \item \baseplane  (no CutEnc): We removed the cut node term from the update formula and MAE worsened by $0.003$.
    \item \baseplane  (no neighbors): We additionally removed the neighbor aggregation from the update formula and MAE worsened by $0.023$.
    \item \baseplane  (only BiEnc): We only used the triconnected components for the update formula and MAE worsened by $0.021$.
    \item \baseplane  (only TriEnc): We only used the triconnected components for the update formula and MAE worsened by $0.016$.
\end{enumerate}

Hence, BasePlanE significantly benefits from each of its components: the combination of standard message passing with component decompositions is essential to obtaining our reported results.

\section{Further experimental details}
\label{app:details}

\subsection{Link to code}
The code for our experiments, as well as instructions to reproduce our results and set up dependencies, can be found at this GitHub repository: \url{https://github.com/ZZYSonny/PlanE}
\subsection{Computational resources}
We run all experiments on 4 cores from Intel Xeon Platinum 8268 CPU @ 2.90GHz with 32GB RAM. In Table \ref{tab:compute}, we report the approximate time to train a \baseplane model on each dataset and with each tuned hidden dimension value.

\begin{table}[ht]
    \centering
    \caption{Approximate Training Time for \baseplane.}
    \label{tab:compute}
    \begin{tabular}{ccc}
        \toprule
        Dataset Name & Hidden Dimension & Training Time (hours) \\
        \midrule
        \qmsub & 32 & 2.5 \\
        MolHIV & 64 & 6 \\
        QM9 & 128 & 25 \\
        \multirow{2}{*}{ZINC(12k)} & 64 & 5 \\ & 128 & 8 \\
        ZINC(Full) & 128 & 45 \\
        EXP & 32 & 0.2 \\
        \bottomrule
    \end{tabular}
\end{table}

\subsection{\ebaseplane architecture} \ebaseplane builds on \baseplane, and additionally processes edge features within the input graph. To this end, it supersedes the original \baseplane update equations for SPQR components and nodes with the following analogs: 
\begin{align*}
    \widehat{\vh}_{C}^{(\ell)} =  \mlp &\Big(
    \sum_{i=1}^{|\omega|} \mlp(
            \vh^{(\ell-1)}_{\omega[i]} \| \hat{\vh}^{(\ell-1)}_{\omega[i],\omega[(i+1)\%|w|]} \| \vp_{\kappa[i]} \| \vp_i
        )
    )\Big), \text{ and} \\
    \vh_u^{(\ell)}  = f^{(\ell)} \Big( 
    & g_1^{(\ell)}\big(\vh_u^{(\ell-1)} + \sum_{v \in N_u} g_5^{(\ell)}(\vh_v^{(\ell-1)} \| \vh_{v,u}^{(\ell-1)}) \big)
    ~\|~g_2^{(\ell)} \big(\sum_{v \in V} \vh_v^{(\ell-1)}\big) \\
    & g_3^{(\ell)}\big(\vh_u^{(\ell-1)} + \sum_{B \in \pi_u^G} \widetilde{\vh}_B^{(\ell)}\big) 
    ~\|~g_4^{(\ell)}\big(\vh_u^{(\ell-1)} + \sum_{C \in \sigma_u^G} \widehat{\vh}_C^{(\ell)}\big)~
    ~\|~\widebar{\vh}_{\delta_u}^{(\ell)} 
\Big),
\end{align*}
where $\hat{\vh}_{i,j}=\vh_{i,j}$ if $(i,j) \in E$ and is the ones vector ($\mathbf{1}^d$) otherwise

\subsection{Training setups}

\begin{figure}[h]
  \centering
\begin{tikzpicture}
\begin{axis}[scale=1 , ymin=0, ymax=4500, xmin=0, xmax=1,  minor y tick num = 5, area style, xlabel= Normalized CC, ylabel=\# Graphs, ytick={1000, 2000, 3000, 4000}, yticklabels={1k,2k,3k,4k}]
\addplot+[ybar interval,mark=no] plot coordinates {(0, 0) (0.01, 2750) (0.02, 0) (0.03, 706) (0.04, 0) (0.05, 2309) (0.06, 0) (0.07, 997) (0.08, 3603) (0.09, 0) (0.10, 1967) (0.11, 1794) (0.12, 0) (0.13, 1791) (0.14, 0) (0.15, 3243) (0.16, 0) (0.17, 2942) (0.18, 0) (0.19, 2611) (0.20, 0) (0.21, 2812) (0.22, 0) (0.23, 2305) (0.24, 0) (0.25, 0) (0.26, 2677) (0.27, 0) (0.28, 2245) (0.29, 0) (0.30, 0) (0.31, 1812) (0.32, 0) (0.33, 0) (0.34, 1831) (0.35, 0) (0.36, 0) (0.37, 1587) (0.38, 0) (0.39, 0) (0.40, 1446) (0.41, 0) (0.42, 0) (0.43, 1215) (0.44, 0) (0.45, 0) (0.46, 0) (0.47, 796) (0.48, 0) (0.49, 0) (0.50, 0) (0.51, 844) (0.52, 0) (0.53, 0) (0.54, 0) (0.55, 428) (0.56, 0) (0.57, 0) (0.58, 426) (0.59, 0) (0.60, 341) (0.61, 0) (0.62, 743) (0.63, 0) (0.64, 0) (0.65, 632) (0.66, 0) (0.67, 0) (0.68, 0) (0.69, 0) (0.70, 1296) (0.71, 0) (0.72, 0) (0.73, 241) (0.74, 0) (0.75, 0) (0.76, 894) (0.77, 0) (0.78, 0) (0.79, 0) (0.80, 338) (0.81, 0) (0.82, 0) (0.83, 483) (0.84, 0) (0.85, 0) (0.86, 463) (0.87, 0) (0.88, 0) (0.89, 0) (0.90, 609) (0.91, 0) (0.92, 0) (0.93, 0) (0.94, 470) (0.95, 0) (0.96, 0) (0.97, 0) (0.98, 361) (0.99, 0) (1.00, 0)};
\end{axis}
\end{tikzpicture}
\vspace{-0.5cm}
  \caption{Normalized clustering coefficient distribution of \qmsub.}
  \label{fig:cc_distb}
\end{figure}
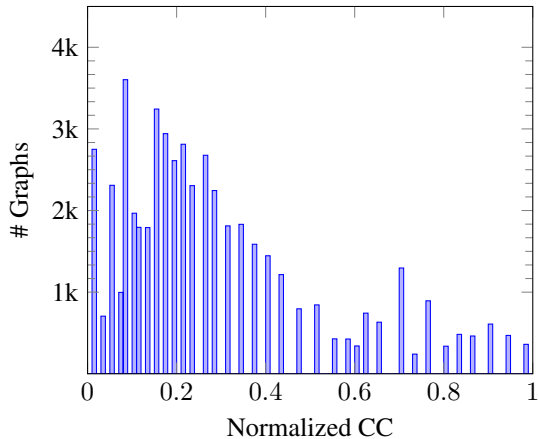

\textbf{Graph classification on EXP} We follow the same protocol as the original work: we use 10-fold cross validation on the dataset, train \baseplane on each fold for 50 epochs using the Adam \cite{KingmaB15} optimizer with a learning rate of $10^{-3}$, and binary cross entropy loss.

\textbf{Graph classification on P3R.} 
We follow a very similar protocol to the one in EXP: we use 10-fold cross validation, where we train \baseplane for 100 epochs using the Adam \cite{KingmaB15} optimizer with a learning rate of $10^{-3}$, and cross entropy loss. 

\textbf{Clustering coefficient of QM9 graphs.} To train all baselines, we use the Adam optimizer with a learning rate from $\{10^{-3}; 10^{-4}\}$, and train all models for 100 epochs using a batch size of 256 and L2 loss. We report the overall label distribution of normalized clustering coefficients on QM9 graphs in \Cref{fig:cc_distb}.

\textbf{Graph classification on MolHIV.} We instantiate \ebaseplane with an embedding dimension of 64 and a positional encoding dimensionality of 16. We further tune the number of layers within the set $\{2, 3\}$ and use a dropout probability from the set $\{0, 0.25, 0.5, 0.66$\}. Furthermore, we train our models with the Adam optimizer \cite{KingmaB15}, with a constant learning rate of $10^{-3}$. Finally, we perform training with a batch size of 256 and train for 300 epochs. 

\textbf{Graph regression on QM9.}  As standard, we train \ebaseplane using mean squared error (MSE) and report mean absolute error (MAE) on the test set. For training \ebaseplane, we tune the learning rate from the set $\{10^{-3}, 5\times10^{-4}\}$ with the Adam optimizer, and adopt a learning rate decay of 0.7 every 25 epochs. Furthermore, we use a batch size of 256, 128-dimensional node embeddings, and 32-dimensional positional encoding. 

\textbf{Graph regression on ZINC.}  In all experiments, we use a node embedding dimensionality from the set $\{64, 128\}$, use 3 message passing layers, and 16-dimensional positional encoding vectors. We train both \baseplane and \ebaseplane with the Adam optimizer \cite{KingmaB15} using a learning rate from the set $\{10^{-3}, 5\times10^{-4}, 10^{-4}\}$, and follow a decay strategy where the learning rate decays by a factor of 2 for every 25 epochs where validation loss does not improve. We train using a batch size of 256 in all experiments, and run training using L1 loss for 500 epochs on the ZINC subset, and for 200 epochs on the full ZINC dataset.


\end{document}